\documentclass[a4paper, 11pt]{article}
\pdfoutput=1

\usepackage{amsthm}
\usepackage{amsmath}
\usepackage{amssymb}
\usepackage{amsfonts}
\usepackage{mathtools}

\usepackage[english]{babel}
\usepackage{xspace}
\usepackage[top=1in, bottom=1.25in, left=1.25in, right=1.25in]{geometry}

\usepackage{url}
\usepackage{natbib}
\usepackage[breaklinks, colorlinks, linkcolor=black, citecolor=black, 
urlcolor=black]{hyperref}
\usepackage{hypernat}

\bibpunct{(}{)}{,}{a}{,}{,}

\usepackage{placeins}
\usepackage[pdftex, dvipsnames]{xcolor}
\usepackage[caption=false]{subfig}
\usepackage{graphicx}
\graphicspath{{./images/}}

\usepackage{algorithm}
\usepackage{algpseudocode}

\def\Keywords{classifier performance, meta-learning, multiclass 
	classification, hierarchical classifier, classifier comparison}

\hypersetup{%
	pdfauthor={Gerrit J.J. van den Burg, Alfred O. Hero III},
	pdftitle={Fast Meta-Learning for Adaptive Hierarchical Classifier 
		Design},
	pdfkeywords={\Keywords},
	pdfsubject={Machine Learning}
}

\hyphenation{multi-class Jensen-Shannon}

\newcommand{\bft}[1]{\mathbf{#1}}
\newcommand{\bfs}[1]{\boldsymbol{#1}}

\newcommand{\abs}[1]{\left|#1\right|}
\newcommand{\norm}[1]{\left\|#1\right\|}
\newcommand{\myd}{\,\mathrm{d}}


\DeclareMathOperator*{\argmax}{arg\,max}

\newtheorem{theorem}{Theorem}
\newtheorem{lemma}[theorem]{Lemma}
\newtheorem{definition}[theorem]{Definition}

\newcommand{\nobs}{n}
\newcommand{\nvar}{d}
\newcommand{\nclass}{K}
\newcommand{\bat}{Bhattacharyya\xspace}

\title{Fast Meta-Learning for Adaptive Hierarchical Classifier Design}

\author{
	Gerrit J.J. van den Burg$^1$
	\and
	Alfred O. Hero$^2$
}

\date{
	{\small
	$^1$Econometric Institute, Erasmus University Rotterdam, The 
	Netherlands\\
	$^2$Electrical Engineering and Computer Science, University of 
	Michigan, USA\\[2ex]
	\today\\
}}

\begin{document}

\maketitle

\begin{abstract}%
	\noindent
	We propose a new splitting criterion for a meta-learning approach to 
	multiclass classifier design that adaptively merges the classes into a 
	tree-structured hierarchy of increasingly difficult binary 
	classification problems. The classification tree is constructed from 
	empirical estimates of the Henze-Penrose bounds on the pairwise Bayes 
	misclassification rates that rank the binary subproblems in terms of 
	difficulty of classification. The proposed empirical estimates of the 
	Bayes error rate are computed from the minimal spanning tree (MST) of 
	the samples from each pair of classes. Moreover, a meta-learning 
	technique is presented for quantifying the one-vs-rest Bayes error 
	rate for each individual class from a single MST on the entire 
	dataset.  Extensive simulations on benchmark datasets show that the 
	proposed hierarchical method can often be learned much faster than 
	competing methods, while achieving competitive accuracy.
	\vskip\baselineskip\noindent
	\textbf{Keywords:} \Keywords
\end{abstract}

\section{Introduction}
The Bayes error rate (BER) is a central concept in the statistical theory of 
classification. It represents the error rate of the Bayes classifier, which 
assigns a label to an object corresponding to the class with the highest 
posterior probability. By definition, the Bayes error represents the smallest 
possible average error rate that can be achieved by any decision rule 
\citep{wald1947foundations}. Because of these properties, the BER is of great 
interest both for benchmarking classification algorithms as well as for the 
practical design of classification algorithms. For example, an accurate 
approximation of the BER can be used for classifier parameter selection, data 
dimensionality reduction, or variable selection. However, accurate BER 
approximation is difficult, especially in high dimension, and thus much 
attention has focused on tight and tractable BER bounds. This paper proposes a 
model-free approach to designing multiclass classifiers using a bias-corrected 
BER bound estimated directly from the multiclass data.

There exists several useful bounds on the BER that are functions of the 
class-dependent feature distributions. These include information theoretic 
divergence measures such as the Chernoff $\alpha$-divergence 
\citep{chernoff1952measure}, the \bat divergence 
\citep{kailath1967divergence}, or the Jensen-Shannon divergence 
\citep{lin1991divergence}.  Alternatively, arbitrarily tight bounds on 
performance can be constructed using sinusoidal or hyperbolic approximations 
\citep{hashlamoun1994tight, avi1996arbitrarily}. These bounds are functions of 
the unknown class-dependent feature distributions.

Recently, \citet{berisha2016empirically} introduced a divergence measure 
belonging to the family of $f$-divergences which tightly bounds the Bayes 
error rate in the binary classification problem. The bounds on the BER 
obtained with this measure are tighter than bounds derived from the \bat or 
Chernoff bounds. Moreover, this divergence measure can be estimated 
nonparametrically from the data without resorting to density estimates of the 
distribution functions. Inspired by the Friedman-Rafsky multivariate runs test 
\citep{friedman1979multivariate}, estimation is based on computing the 
Euclidean minimal spanning tree (MST) of the data, which can be done in 
approximately $O(\nobs \log \nobs)$ time. In this paper we propose 
improvements to this estimator for problems when there are unequal class 
priors and apply the improved estimator to the adaptive design of a 
hierarchical multiclass classifier. Furthermore, a fast method is proposed for 
bounding the Bayes error rate of individual classes which only requires 
computing a single minimal spanning tree over the entire set of samples. Thus 
our proposed method is faster than competing methods that use density plug-in 
estimation of divergence or observed misclassification rates of algorithms, 
such as SVM or logistic regression, which involve expensive parameter tuning. 

%
%
%
%

Quantifying the complexity of a classification problem has been of significant 
interest \citep{ho2002complexity} and it is clear that a fast and accurate 
estimate of this complexity has many practical applications.  For instance, an 
accurate complexity estimator allows the researcher to assess a priori whether 
a given classification problem is \emph{difficult} to classify or not.  In a 
multiclass problem, a pair of classes which are difficult to disambiguate 
could potentially be merged or could be designated for additional data 
collection.  Moreover, an accurate estimate of the BER could be used for 
variable selection, an application that was explored previously in 
\citet{berisha2016empirically}.  In Section~\ref{sec:meta_learning} further 
applications of the BER estimates to multiclass classification are presented 
and evaluated.

There are many methods available for the design of multiclass classification 
algorithms, including: logistic regression \citep{cox1958regression}; support 
vector machines \citep{cortes1995support}; and neural networks 
\citep{mcculloch1943logical}.  It is often the case that classifier 
performance will be better for some classes than for others, for instance due 
to sample imbalance in the training set. Often classifier designs apply 
weights to the different classes in order to reduce the effect of such 
imbalances on average classifier accuracy 
\citep{lu1998robust,qiao2009adaptive}.  We take a different and more general 
approach that incorporates an empirical determination of the relative 
difficulties of classifying between different classes. Accurate empirical 
estimates of the BER are used for this purpose. A multiclass classifier is 
presented in Section~\ref{sec:smartsvm} that uses MST-based BER estimates to 
create a hierarchy of binary subproblems that increase in difficulty as the 
algorithm progresses.  This way, the classifier initially works on easily 
decidable subproblems before moving on to more difficult multiclass 
classification problems.




The paper is organized as follows. The theory of the nonparametric Bayes error 
estimator of \citet{berisha2016empirically} will be reviewed in 
Section~\ref{sec:mich_theory}.  We will introduce a bias correction for this 
estimator, motivate the use of the estimator for multiclass classification, 
and discuss computational complexity. Section~\ref{sec:meta_learning} will 
introduce applications of the estimator to meta-learning in multiclass 
classification. A novel hierarchical classification method will be introduced 
and evaluated in Section~\ref{sec:smartsvm}.  Section~\ref{sec:mich_discuss} 
provides concluding remarks.

\section{An Improved BER Estimator}
\label{sec:mich_theory}

Here the motivation and theory for the estimator of the Bayes error rate is 
reviewed, as introduced by \citet{berisha2016empirically}.  An improvement on 
this estimator is proposed for the case where class prior probabilities are 
unequal.  Next, the application of the estimator in multiclass classification 
problems is considered. Finally, computational considerations and robustness 
analyses are presented.

\subsection{Estimating the Bayes Error Rate}
Consider the binary classification problem on a dataset $\mathcal{D} = 
\{(\bft{x}_i, y_i)\}_{i=1, \ldots, \nobs}$, where $\bft{x}_i \in \mathcal{X} 
\subseteq \mathbb{R}^{\nvar}$ and $y_i \in \{1, 2\}$. Denote the multivariate 
density functions for the two classes by $f_1(\bft{x})$ and $f_2(\bft{x})$ and 
the prior probabilities by $p_1$ and $p_2 = 1 - p_1$, respectively. The Bayes 
error rate of this binary classification problem can be expressed as 
\citep{fukunaga1990statistical}
\begin{equation}
	P_e(f_1, f_2) = \int \min \left\{ p_1 f_1(\bft{x}), p_2 f_2(\bft{x})  
	\right\} \myd \bft{x}.
\end{equation}
%

Recently, \citet{berisha2016empirically} derived a tight bound on the BER 
which can be estimated directly from the data without a parametric model for 
the density or density estimation.  This bound is based on a divergence 
measure introduced by \citet{berisha2015empirical}, defined as
\begin{equation}
	\label{eq:hpdiv}
	D_{HP}(f_1, f_2) = \frac{1}{4p_1p_2}\left[ \int \frac{ \left( p_1 
				f_1(\bft{x}) - p_2 f_2(\bft{x}) \right)^2 }{ 
			p_1f_1(\bft{x}) + p_2f_2(\bft{x})  } \myd \bft{x} - 
		(p_1 - p_2)^2 \right],
\end{equation}
and called the Henze-Penrose divergence, as it is motivated by an affinity 
measure defined by \citet{henze1999multivariate}. In 
\citet{berisha2015empirical} it was shown that (\ref{eq:hpdiv}) is a proper 
$f$-divergence as defined by \citet{csiszar1975divergence}.

Estimation of the Henze-Penrose (HP) divergence is based on the multivariate 
runs test proposed by \citet{friedman1979multivariate} and convergence of this 
test was studied by \citet{henze1999multivariate}. Let $\bft{X}_k = \{ 
\bft{x}_i \in \mathcal{X} \subseteq \mathbb{R}^{\nvar} : y_i = k \}$ with $k 
\in \{1, 2\}$ denote multidimensional features from two classes. Define the 
class sample sizes $\nobs_k = \abs{\bft{X}_k}$, $k \in \{1, 2\}$.  Let the 
combined sample be denoted by $\bft{X} = \bft{X}_1 \cup \bft{X}_2$ where 
$\nobs = \abs{\bft{X}} = \nobs_1 + \nobs_2$ is the total number of samples 
from both classes.  Define the complete graph $\mathcal{G}$ over $\mathbf{X}$ 
as the graph connecting all $\nobs$ nodes $\{\bft{x}_i\}_{i=1}^{\nobs}$ with 
edge weights $|e_{kj}| = \norm{\bft{x}_k - \bft{x}_j}$ equal to Euclidean 
distances.  The Euclidean minimal spanning tree that spans $\bft{X}$, denoted 
by $\mathcal{T}$, is defined as the subgraph of $\mathcal{G}$ that is both 
connected and whose sum of edge weights is the smallest possible.  The 
Friedman-Rafsky test statistic equals the number of edges in $\mathcal{T}$ 
that connect an observation from class $1$ to an observation from class $2$ 
and is denoted by $R_{1,2}$.

Building on the work by \citet{henze1999multivariate}, 
\citet{berisha2016empirically} show that if $\nobs_1 \rightarrow \infty$ and 
$\nobs_2 \rightarrow \infty$ in a linked manner such that $\nobs_1/(\nobs_1 + 
\nobs_2) \rightarrow \delta \in (0, 1)$ then,
\begin{equation}
	\label{eq:convergence_dhp}
	1 - \frac{\nobs R_{1, 2}}{2 \nobs_1 \nobs_2} \rightarrow D_{HP}(f_1, 
	f_2) \qquad \textit{a.s.}
\end{equation}
Thus, the number of cross connections between the classes in the Euclidean MST 
is inversely proportional to the divergence between the respective probability 
density functions of these classes. 

Finally, the HP-divergence can be used to bound the Bayes error rate, 
$P_e(f_1, f_2)$, following Theorem~$2$ of \citet{berisha2016empirically}
\begin{equation}
	\tfrac{1}{2} - \tfrac{1}{2}\sqrt{u_{HP}(f_1, f_2)} \leq P_e(f_1, f_2) 
	\leq \tfrac{1}{2} - \tfrac{1}{2}u_{HP}(f_1, f_2),
\end{equation}
where
\begin{equation}
	u_{HP}(f_1, f_2) = 4 p_1 p_2 D_{HP}(f_1, f_2) + (p_1 - p_2)^2.
\end{equation}
Averaging these bounds yields an estimate of the BER given by
\begin{equation}
	\label{eq:hp_estimator}
	\hat{P}_e(f_1, f_2) = \tfrac{1}{2} - \tfrac{1}{4}\sqrt{u_{HP}(f_1, 
		f_2)} - \tfrac{1}{4}u_{HP}(f_1, f_2).
\end{equation}
In the following, this estimator will be referred to as the HP-estimator of 
the BER.

\subsection{A modified HP-estimator for unequal class priors}
\label{sec:unequalprior}

To illustrate the performance of the HP-estimator, and to motivate the 
proposed modification, consider a binary classification problem where the 
samples are drawn from two independent bivariate Gaussian distributions with 
equal covariance matrices. For this example the BER and associated bounds can 
be computed exactly \citep{fukunaga1990statistical}. In 
Figure~\ref{fig:error_illustration_1} we compare the BER, the HP-estimator of 
the BER (\ref{eq:hp_estimator}) and the popular \bat bound on the BER 
\citep{bhattacharyya1946measure,kailath1967divergence}.  
Figure~\ref{fig:error_illustration_1} shows that the HP-estimator is closer to 
the true BER than the \bat bound. This result was illustrated by 
\citet{berisha2016empirically} for the case where $p_1 = p_2$ and is confirmed 
here for the case where $p_1 \neq p_2$.

\begin{figure}[tb]
	\def\FigWidthHPAccuracy{0.40\textwidth}
	\centering
	\subfloat[][without bias correction]{%
		\includegraphics[width=\FigWidthHPAccuracy]{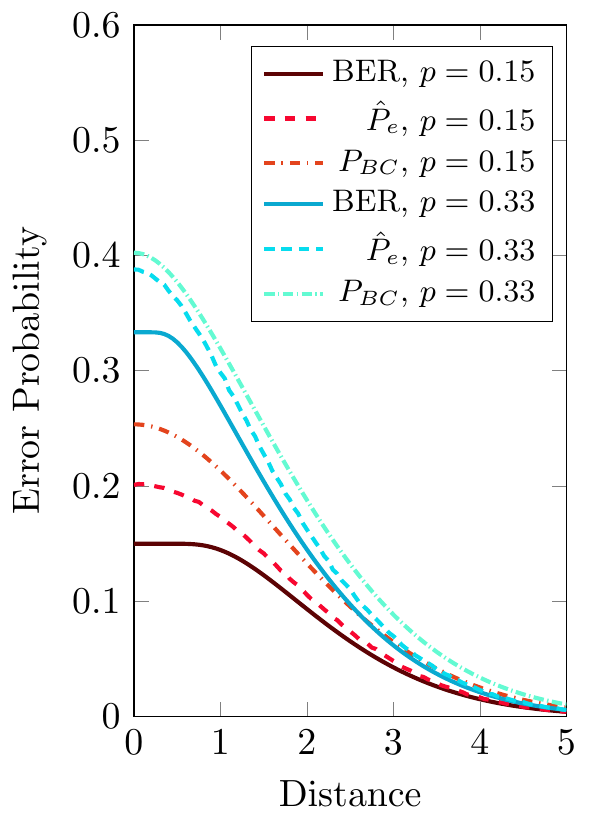}
		\label{fig:error_illustration_1}
	}\qquad\qquad
	\subfloat[][with bias correction]{%
		\includegraphics[width=\FigWidthHPAccuracy]{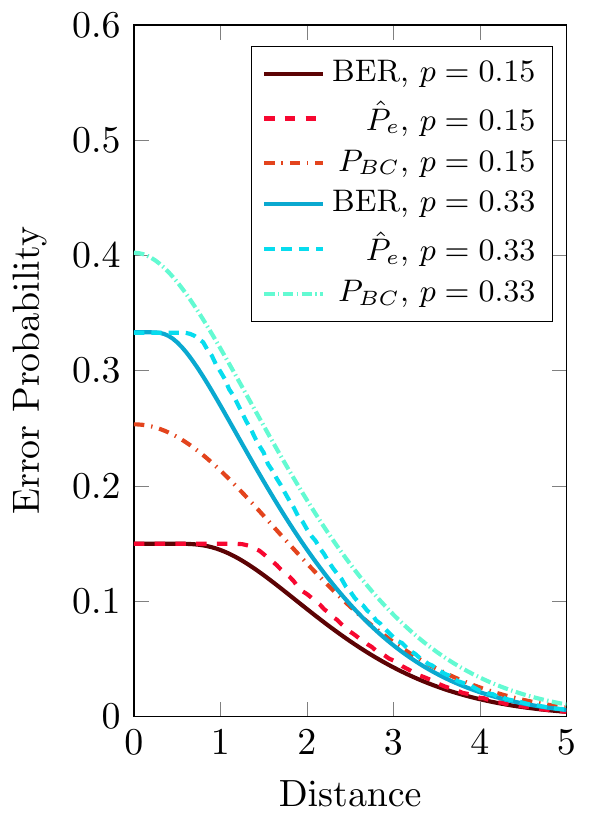}
		\label{fig:error_illustration_2}
	}
	\caption[Bias correction on the BER estimate]{Illustration of the 
		estimates of the Bayes error rate for two different values of 
		the prior probability $p$ and for spherical bivariate Gaussian 
		distributions with increasing separation between the classes 
		as measured by Euclidean distance between the class means, 
		$\norm{\bfs{\mu}_1 - \bfs{\mu}_2}$. We compare the true BER, 
		the \bat bound ($P_{BC}$) requiring the true class 
		distributions, and the HP-estimator ($\hat{P}_e$) based on an 
		empirical sample.  Figures~\subref{fig:error_illustration_1} 
		and \subref{fig:error_illustration_2} show the estimates for 
		$p = 0.15$ and $p = 1/3$, respectively with and without the 
		proposed bias correction of the HP-estimator. The HP-estimator 
		was implemented using $\nobs = 1000$ samples ($\nobs_1 = 
		p\nobs$ and $\nobs_2 = \nobs - \nobs_1$) and averaged over 200 
		trials.  \label{fig:error_illustration}}
\end{figure}

However, Figure~\ref{fig:error_illustration_1} also shows that a significant 
bias occurs in the HP-estimate of the BER when the distance between classes is 
small. Considering that $P_e(f_1, f_2) = \min\{p_1, p_2\}$ if $f_1 \rightarrow 
f_2$, the solution of the equation $\hat{P}_e(f_1, f_2) = \min\{\hat{p}_1, 
\hat{p}_2\}$ for $R_{1,2}$ suggests a bias corrected version of the 
HP-estimate:\footnote{This correction can readily be derived by using the fact 
	that $1 - \frac{2R_{1,2}}{n} \rightarrow u_{HP}(f_1, f_2)$, which 
	holds under the same conditions as for (\ref{eq:convergence_dhp}).}
\begin{equation}
	\label{eq:bias_correction}%
	R_{1,2}' = \min\{\gamma, R_{1,2}\},
\end{equation}
with
\begin{equation}
	\gamma = 2 \nobs \min\{\hat{p}_1, \hat{p}_2\} - \tfrac{3}{4}\nobs + 
	\tfrac{1}{4} \nobs \sqrt{9 - 16 \min\{\hat{p}_1, \hat{p}_2\}},
\end{equation}
where $\hat{p}_1 = \nobs_1 / \nobs$ and $\hat{p}_2 = \nobs_2 / \nobs$ are 
estimates of the true prior probabilities.  
Figure~\ref{fig:error_illustration_2} shows the effect of this bias correction 
on the accuracy of the HP-estimator. As can be seen, the bias correction 
significantly improves the accuracy of the HP-estimator for when the class 
distributions are not well separated.

\subsection{Multiclass classification}
Here we apply the HP-estimate to multiclass classification problems by 
extending the bias corrected HP-estimator to a multiclass Bayes error rate.  
The original multiclass HP-estimator has been defined by 
\citet{wisler2016empirically} and we show how the framework can be applied to 
hierarchical multiclassifier design.

Consider a multiclass problem with $\nclass$ classes with $\bft{x}_i \in 
\mathcal{X} \subseteq \mathbb{R}^{\nvar}$ and $y_i \in \{1, \ldots, 
\nclass\}$, with prior probabilities $p_k$ and density functions 
$f_k(\bft{x})$ for $k = 1, \ldots, \nclass$ such that $\sum_k p_k = 1$. Then, 
the BER can be estimated for each pair of classes using the bias-corrected 
HP-estimator $\hat{P}_e(f_k, f_j)$ using (\ref{eq:bias_correction}). The 
binary classification problem with the largest BER estimate is defined as most 
difficult.

Recall that the largest BER that can be achieved in a binary classification 
problem with unequal class priors is equal to the value of the smallest prior 
probability. This makes it difficult to compare empirical estimates of the BER 
when class sizes are imbalanced. To correct for this, the HP-estimator for 
pairwise classification BERs can be normalized for class sizes using
\begin{equation}
	\label{eq:hp_normalized}
	\hat{P}_e'(f_k, f_l) = \frac{\hat{P}_e(f_k, f_l)}{\min\{\hat{p}_k, 
		\hat{p}_l\}}.
\end{equation}
This normalization places the HP-estimate in the interval $[0, 1]$ and makes 
it possible to more accurately compare the BER estimates of different binary 
problems.

In practice it can also be of interest to understand how difficult it is to 
discriminate each individual class. By reducing the multiclass problem to a 
One-vs-Rest classification problem, it is straightforward to define a 
\emph{confusion rate} for a given class $k$. This represents the fraction of 
instances that are erroneously assigned to class $k$ and the fraction of 
instances which are truly from class $k$ that are assigned to a different 
class. Formally, define the confusion rate for class $k$ as
\begin{equation}
	C_k(y, \hat{y}) = \frac{ \abs{\{ i : \hat{y}_i = k, y_i \neq k \}} + 
		\abs{\{ i : \hat{y}_i \neq k, y_i = k \}} }{\nobs},
\end{equation}
with $\hat{y}_i$ the predicted class for instance $i$. Recall that the Bayes 
error rate is the error rate of the Bayes classifier, which assigns an 
instance $\bft{x}$ to class $k = \argmax_l p_l f_l(\bft{x})$.  Hence, the BER 
for a single class $k$ equals the error of assigning to a class $l \neq k$ 
when the true class is $k$ and the total error of assigning to class $k$ when 
the true class is $c \neq k$, thus
\begin{equation}
	P_{e, k} =\qquad\qquad 
	\smashoperator{\int\limits_{\strut\max\limits_{l \neq k}\{ p_l 
			f_l(\bft{x})\} \geq p_k f_k(\bft{x})}} p_k 
	f_k(\bft{x}) \myd \bft{x}
	\quad + \quad
	\sum_{c \neq k}\qquad\qquad 
	\smashoperator{\int\limits_{\strut\max\limits_{l \neq k}\{ p_l 
			f_l(\bft{x}) \} < p_k f_k(\bft{x})}} p_c f_c(\bft{x}) 
	\myd \bft{x}
\end{equation}

We make two observations about this \emph{One-vs-Rest} Bayes error rate 
(OvR-BER).  First, the OvR-BER for class $k$ is smaller than the sum of the 
binary BERs for the problems involving class $k$ (see 
Appendix~\ref{app:ovr_bayes}).  Second, the OvR-BER can be estimated using the 
Henze-Penrose divergence with $R(\bft{X}_k, \bigcup_{l \neq k} \bft{X}_l)$, 
which yields the estimate $\hat{P}_{e,k}$. A computational advantage of using 
the OvR-BER in multiclass problems is that the MST only has to be computed 
only once on the set $\bft{X}$, since the union of $\bft{X}_k$ and 
$\cup_{l\neq k} \bft{X}_l$ is equal to $\bft{X}$. Therefore, $R(\bft{X}_k, 
\bigcup_{l \neq k} \bft{X}_l)$ can be computed for all $k$ from the single MST 
on $\bft{X}$ by keeping track of the labels of each instance.

\subsection{Computational Considerations}
\label{sec:computconsid}
The construction of the minimal spanning tree lies at the heart of the 
HP-estimator of the BER, so it is important to use a fast algorithm for the 
MST construction. Since the HP-estimator is based on the Euclidean MST the 
dual-tree algorithm by \citet{march2010fast} can be applied.  This algorithm 
is based on the construction of \citet{boruuvka1926jistem} and implements the 
Euclidean MST in approximately $O(n \log n)$ time.  For larger datasets it can 
be beneficial to partition the space into hypercubes and construct the MST in 
each partition.

A simple way to improve the robustness of the HP-estimator is to use multiple 
orthogonal MSTs and average the number of cross-connections 
\citep{friedman1979multivariate}. Computing orthogonal MSTs is not 
straightforward in the dual-tree algorithm of \citet{march2010fast}, but is 
easy to implement in MST algorithms that use a pairwise distance matrix such 
as that of \citet{whitney1972algorithm}. Figure~\ref{fig:ortho_msts} shows the 
empirical variance of the HP-estimator for different numbers of orthogonal 
MSTs as a function of the separation between the classes. As expected, the 
variance decreases as the number of orthogonal MSTs increases, although the 
benefit of including more orthogonal MSTs also decreases when adding more 
MSTs.  Therefore, $3$ orthogonal MSTs are typically used in practice.

\begin{figure}[tb]
	\centering
	\includegraphics[width=.75\textwidth]{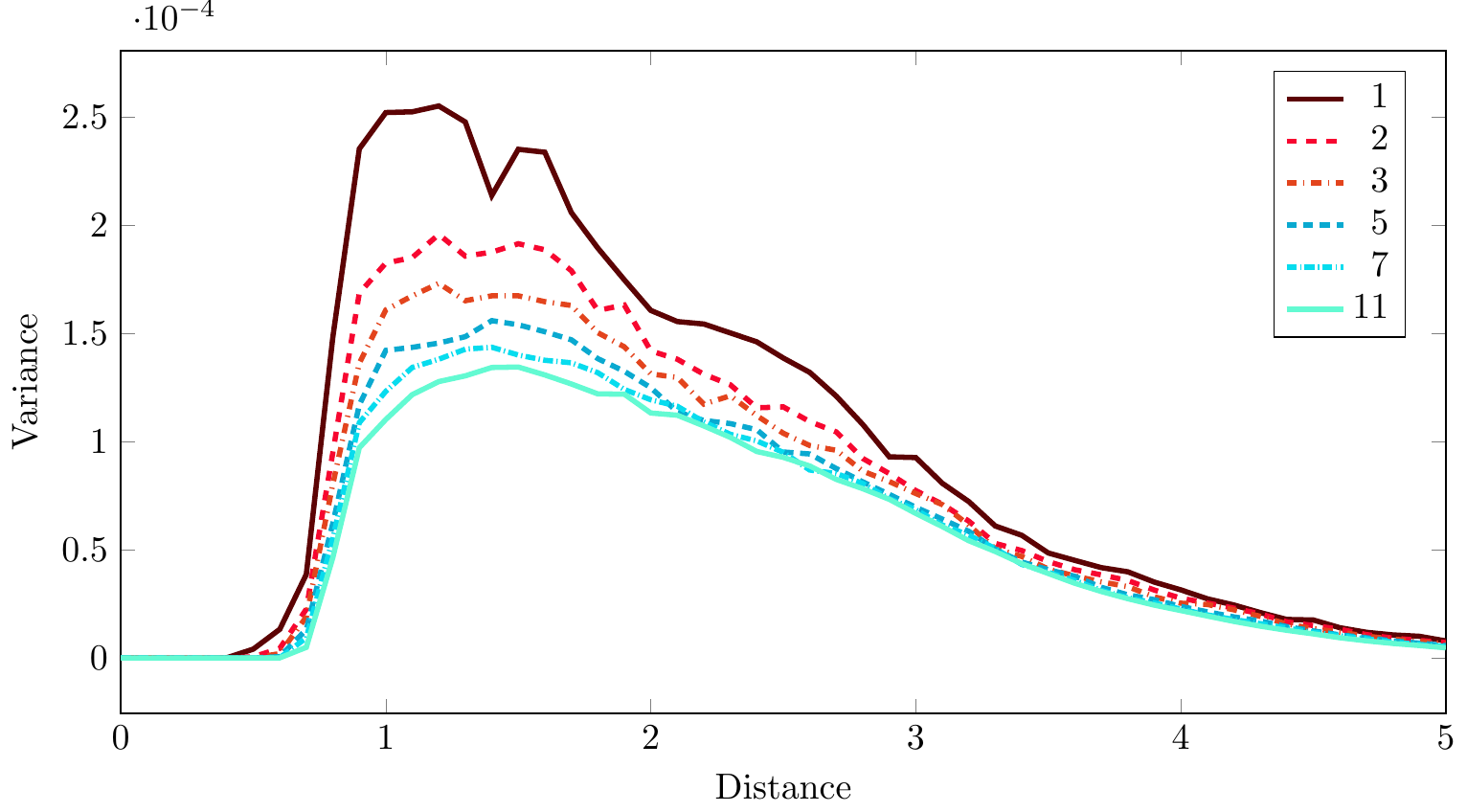}
	\caption[Effect of orthogonal MSTs on BER estimates]{Variance of the 
		HP-estimator of the BER for varying number of orthogonal MSTs 
		as a function of the separation between the classes.  Results 
		are based on 500 repetitions of a binary classification 
		problem with $\nobs = 1000$, $\nvar = 2$, $p_1 = 0.33$, where 
		class distributions are bivariate spherical Gaussians with 
		different means. \label{fig:ortho_msts}}
\end{figure}


\section{Meta-Learning of optimal classifier accuracy}
\label{sec:meta_learning}
Applying the HP-estimator to meta-learning problems creates a number of 
opportunities to assess the difficulty of a classification problem before 
training a classifier. For example, given a multiclass classification problem 
it may be useful to know which classes are difficult to distinguish from each 
other and which classes are easy to distinguish.  Figure~\ref{fig:heatmap} 
shows an illustration of this for handwritten digits in the well-known MNIST 
dataset \citep{lecun1998gradient}. This figure shows a heat map where each 
square corresponds to an estimate of the BER for a binary problem in the 
training set. From this figure it can be seen that the digits 4 and 9 are 
difficult to distinguish, as well as the digits 3 and 5.  This information can 
be useful for the design of a classifier, to ensure for instance that higher 
weights are placed on misclassifications of the more difficult number pairs if 
correct classification of these pairs is of importance to the end-task. In 
Figure~\ref{fig:predmap} a similar heat map is shown based on misclassified 
instances of LeNet-5 \citep{lecun1998gradient} on the test set.  This figure 
shows the symmetric confusion matrix based on the 82 misclassified instances.  
As can be seen, this figure closely corresponds to the heat map on the 
training data, which confirms the predictive accuracy of the HP-estimator for 
real data.

\begin{figure}[tb]
	\def\MNISTHeatWidth{0.35\textwidth}
	\centering
	\subfloat[][]{%
		\includegraphics[width=\MNISTHeatWidth]{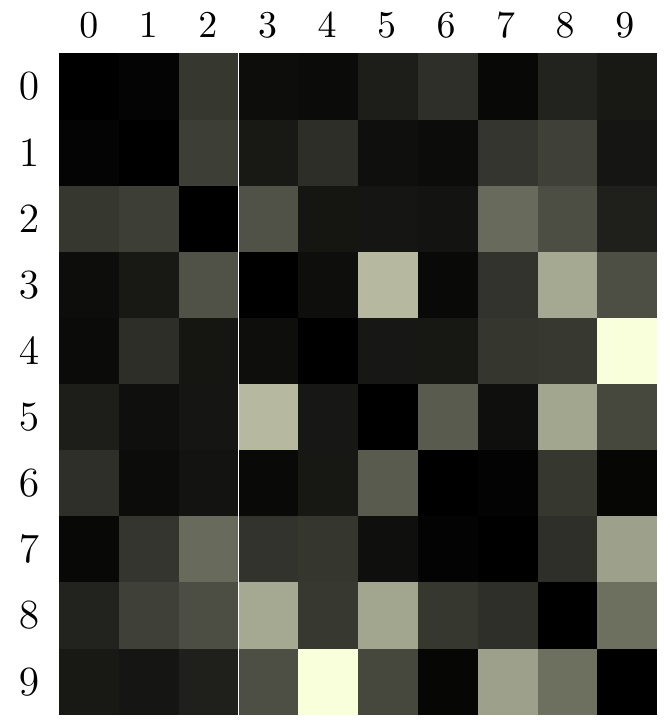}
		\label{fig:heatmap}
	}\qquad\qquad
	\subfloat[][]{%
		\includegraphics[width=\MNISTHeatWidth]{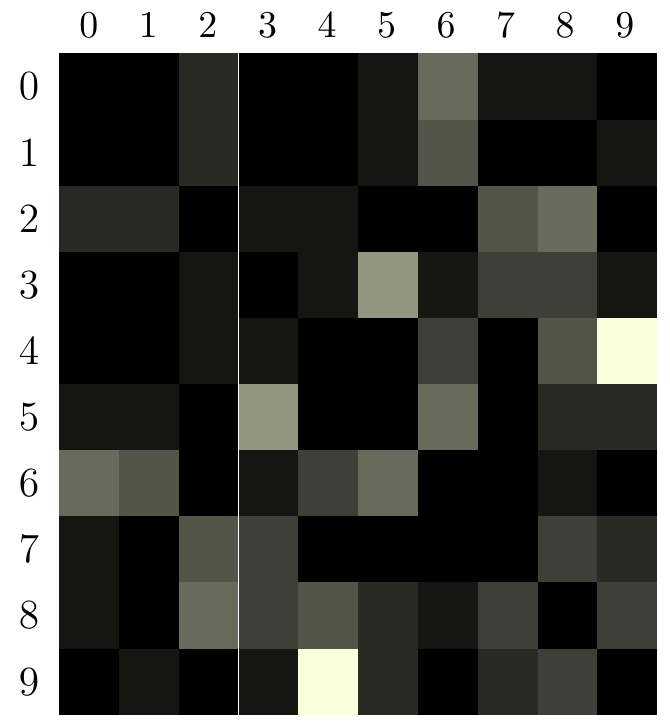}
		\label{fig:predmap}
	}
	\caption[Heat maps of BER estimates on the MNIST dataset]{Heat maps 
		illustrating the difficulty of distinguishing between 
		different handwritten digits in the MNIST dataset of 
		\citet{lecun1998gradient}.  Brighter squares correspond to 
		higher values. In \subref{fig:heatmap} the heat map of the BER 
		estimates on the training data is shown. In 
		\subref{fig:predmap} the heat map is shown of the 82 
		misclassifications made by LeNet-5 on the test data 
		\citep{lecun1998gradient}. From \subref{fig:heatmap} it can be 
		readily identified that the numbers 3 and 5 are difficult to 
		distinguish, as well as 4 and 9. Easy to distinguish number 
		pairs are for instance 6 and 7, and 0 and 1. This pattern is 
		reflected in the results on the test data shown in 
		\subref{fig:predmap}. \label{fig:mnist_heat}}
\end{figure}

Another example of the accuracy of the BER estimates for multiclass 
classification problems is given in Figure~\ref{fig:krkopt}. In this figure, 
OvR-BER estimates $\hat{P}_{e,k}$ and class accuracy scores $C_k(y, \hat{y})$ 
are shown for the Chess dataset ($\nobs = 28056$, $\nvar = 34$, $\nclass = 
18$) obtained from the UCI repository \citep{Bache+Lichman:2013}. This dataset 
was split into a training dataset (70\%) and a test dataset (30\%) and the 
OvR-BER estimates were computed on the training dataset. These estimates are 
compared with the class error rates obtained from out-of-sample predictions of 
the test dataset using GenSVM \citep{van2016gensvm}. This figure shows that 
the OvR-BER estimates are accurate predictors of classification performance.  
The classes that are relatively difficult to classify may benefit from 
increasing misclassification weights.

\begin{figure}[tb]
	\centering
	\includegraphics[width=0.75\textwidth]{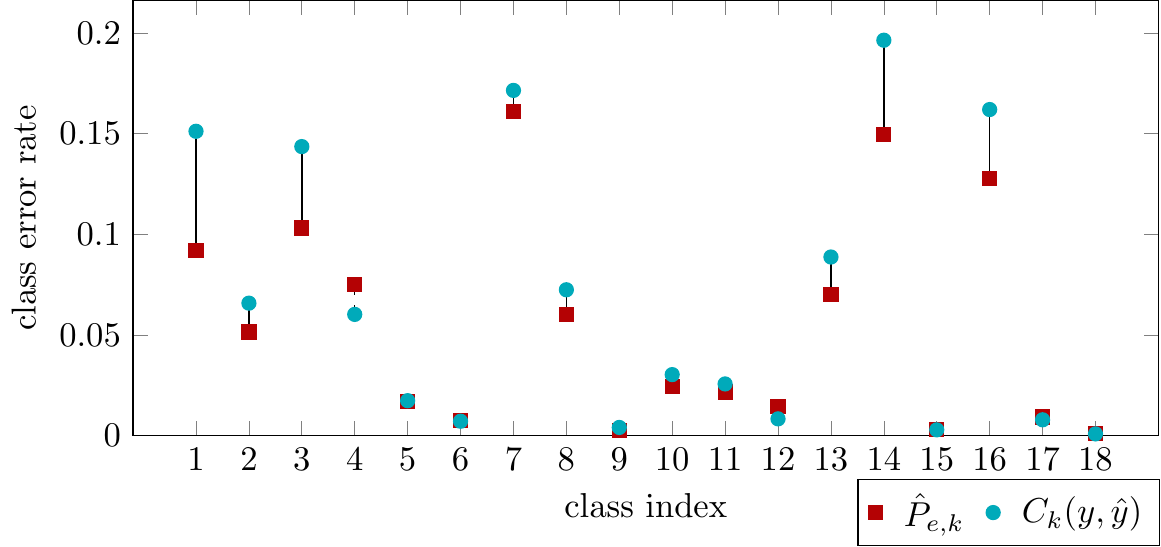}
	\caption[OvR-BER estimates and error rates on the Chess dataset]{
		OvR-BER estimates and test set error rates for each class in 
		the Chess dataset.  It can be seen that for most classes the 
		OvR-BER estimate is an accurate predictor for test set 
		performance.  For classes where there is a difference between 
		the BER and the test set error the BER generally gives a lower 
		bound on the test set performance.  Test set results were 
		obtained with the GenSVM classifier \citep{van2016gensvm}.  
		\label{fig:krkopt}}
\end{figure}

The BER estimates can also be applied to feature selection and, in particular, 
to the identification of useful feature transformations of the data.  A 
feature selection strategy based on forward selection was outlined in 
\citet{berisha2016empirically}. At each feature selection stage, this 
algorithm adds the feature which gives the smallest increase in the BER 
estimate. \citet{berisha2016empirically} show that this feature selection 
strategy quickly yields a subset of useful features for the classification 
problem.


Because the BER estimate is a fast and asymptotically consistent estimate of a 
bound on classification performance, it is easy to try a number of potential 
feature transformations and use the one with the smallest BER estimate in the 
classifier. This can be useful both for traditional feature transformations 
such as PCA \citep{pearson1901liii} and Laplacian Eigenmaps 
\citep{belkin2003laplacian}, but also for commonly used kernel transformations 
in SVMs. For a researcher this can significantly reduce the time needed to 
train a classifier on different transformations of the data. In a multiclass 
setting where the One-vs-One strategy is used, one can even consider a 
different feature transformation for each binary subproblem.  When using a 
unified classification method one can consider feature transformations which 
reduce the average BER estimate or the worst-case BER estimate.

Note that a feature transformation which reduces the dimensionality of the 
dataset without increasing the BER estimate can be considered beneficial, as 
many classification methods are faster for low-dimensional datasets. For 
instance, applying PCA with $2$ components on the Chess dataset only slightly 
increases the BER estimates for two classes, while remaining the same for the 
other classes. Thus, a classifier will likely achieve comparable accuracy with 
this transformed dataset, but will be much faster to train since the 
dimensionality can be reduced from $\nvar = 34$ to $\nvar = 2$.

\section{Hierarchical Multiclass Classification}
\label{sec:smartsvm}
In this section a novel hierarchical multiclass SVM classifier is introduced 
which is based on uncertainty clustering. The BER estimate can be considered a 
measure of the irreducible uncertainty of a classification problem, as a high 
BER indicates an intrinsically difficult problem. This can be used to 
construct a tree of binary classification problems that increase in difficulty 
along the depth of the tree. By fitting a binary classifier (such as an SVM) 
at each internal node of the tree, a classification method is obtained which 
proceeds from the easier binary subproblems to the more difficult binary 
problems.

Similar divide-and-conquer algorithms have been proposed \citep[among 
others]{schwenker2001tree, takahashi2002decision, frank2004ensembles, 
	vural2004hierarchical, tibshirani2007margin}. See 
\citet{lorena2008review} for a review.  These approaches often apply a 
clustering method to create a grouping of the dataset into two clusters, 
repeating this process recursively to form a binary tree of classification 
problems. In \citet{lorena2010building} several empirical distance measures 
are used as indicators of separation difficulty between classes, which are 
applied in a bottom-up procedure to construct a classification tree. Finally, 
in \citet{el2010hierarchical} the Jensen-Shannon divergence is used to bound 
the BER with inequalities from \citet{lin1991divergence} and a classification 
tree is constructed using a randomized heuristic procedure.  Unfortunately, 
the Jensen-Shannon divergence implementation requires parametric estimation of 
distribution functions.  Moreover, for the equiprobable case the upper bound 
on the BER obtained with the Jensen-Shannon divergence can be shown to be less 
tight than that obtained with the HP-divergence (see 
Appendix~\ref{app:jensen_shannon}).  Because of this, these estimates of the 
BER may be less accurate than those obtained with the proposed HP-estimator.

To construct the hierarchical classification tree a complete weighted graph 
$\mathcal{G} = (V, E, w)$ is created where the vertices correspond to the 
classes and the weight of the edges equals the HP-estimate for that binary 
problem.  Formally, let $V = \{1, \ldots, \nclass\}$, $E = \{ (i, j) : i, j 
\in V, i \neq j \}$ and define the edge weight $w(e)$ for $e \in E$ as $w(e) = 
w(i, j) = \hat{P}'_e(f_i, f_j)$.  In the HP-estimator $\hat{P}'_e(f_i, f_j)$ 
the bias correction (\ref{eq:bias_correction}) and the normalization 
(\ref{eq:hp_normalized}) are used. By recursively applying min-cuts to this 
graph a tree of binary classification problems is obtained which increase in 
difficulty along the depth of the tree. Min-cuts on this weighted graph can be 
computed using for instance the method of \citet{stoer1997simple}.  
Figure~\ref{fig:graph_cut} illustrates this process for a multiclass 
classification problem with $\nclass = 7$.

\begin{figure}[tb]
	\centering
	\def\GraphCutFigWidth{.40\textwidth}
	\subfloat[][]{%
		\includegraphics[width=\GraphCutFigWidth]{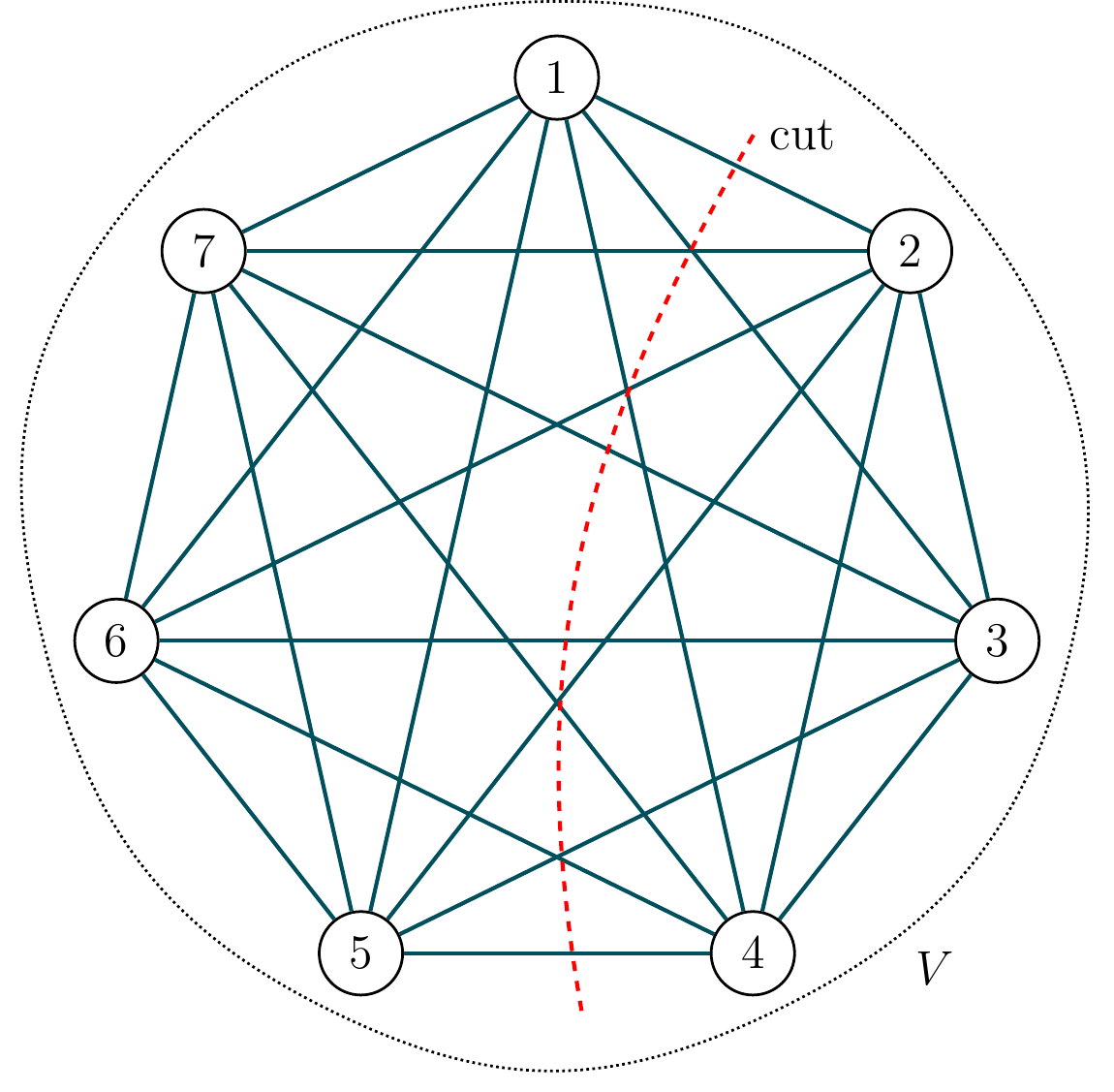}
		\label{fig:graph_cut_1}
	}\qquad
	\subfloat[][]{%
		\includegraphics[width=\GraphCutFigWidth]{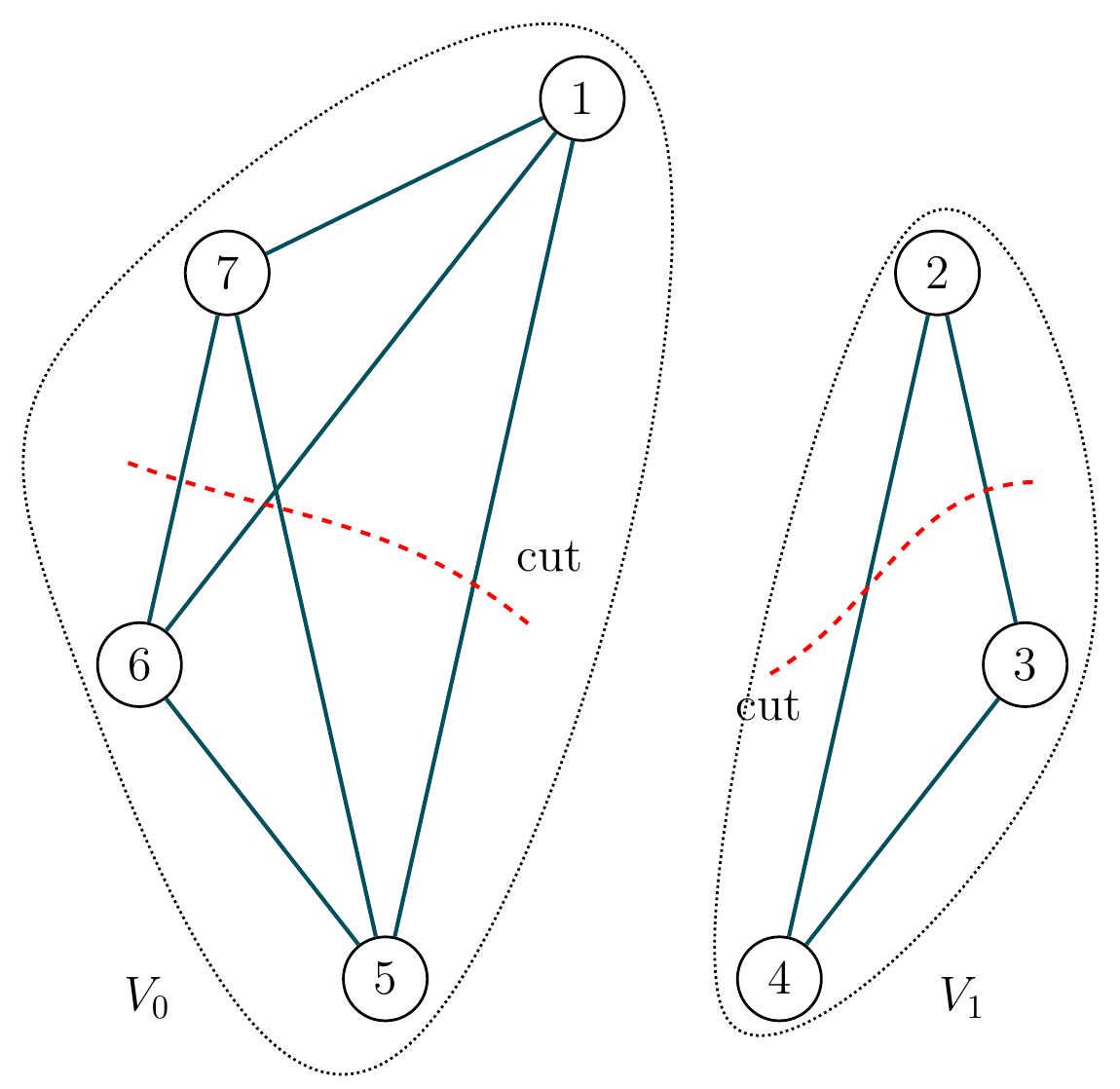}
		\label{fig:graph_cut_2}
	}
	\caption[Illustration of the min-cut procedure]{Illustration of the 
		min-cut procedure for a multiclass classification problem with 
		$\nclass = 7$ classes.  The first cut in 
		\subref{fig:graph_cut_1} splits the vertices in the set $V$ 
		into two sets, $V_{0}$ and $V_{1}$, which are used in a binary 
		classifier.  Figure~\subref{fig:graph_cut_2} illustrates the 
		next step where both $V_{0}$ and $V_{1}$ are split further by 
		cuts in the complete graph for each subset.  
		\label{fig:graph_cut}}
\end{figure}

The tree construction can be described formally as follows. Starting with the 
complete weighted graph with vertices $V$, apply a min-cut algorithm to obtain 
the disjoint vertex sets $V_{0}$ and $V_{1}$ such that $V_0 \cup V_1 = V$.  
This pair of vertex sets then forms a binary classification problem with 
datasets $\bft{X}_{0} = \{ \bft{x}_i \in \bft{X} : y_i \in V_{0} \}$ and 
$\bft{X}_{1} = \{ \bft{x}_i \in \bft{X} : y_i \in V_{1} \}$. Recursively 
applying this procedure to the sets $V_{0}$ and $V_{1}$ until no further 
splits are possible yields a tree of binary classification problems, as 
illustrated in Figure~\ref{fig:class_tree}.

\begin{figure}[tb]
	\centering
	\includegraphics[width=0.75\textwidth]{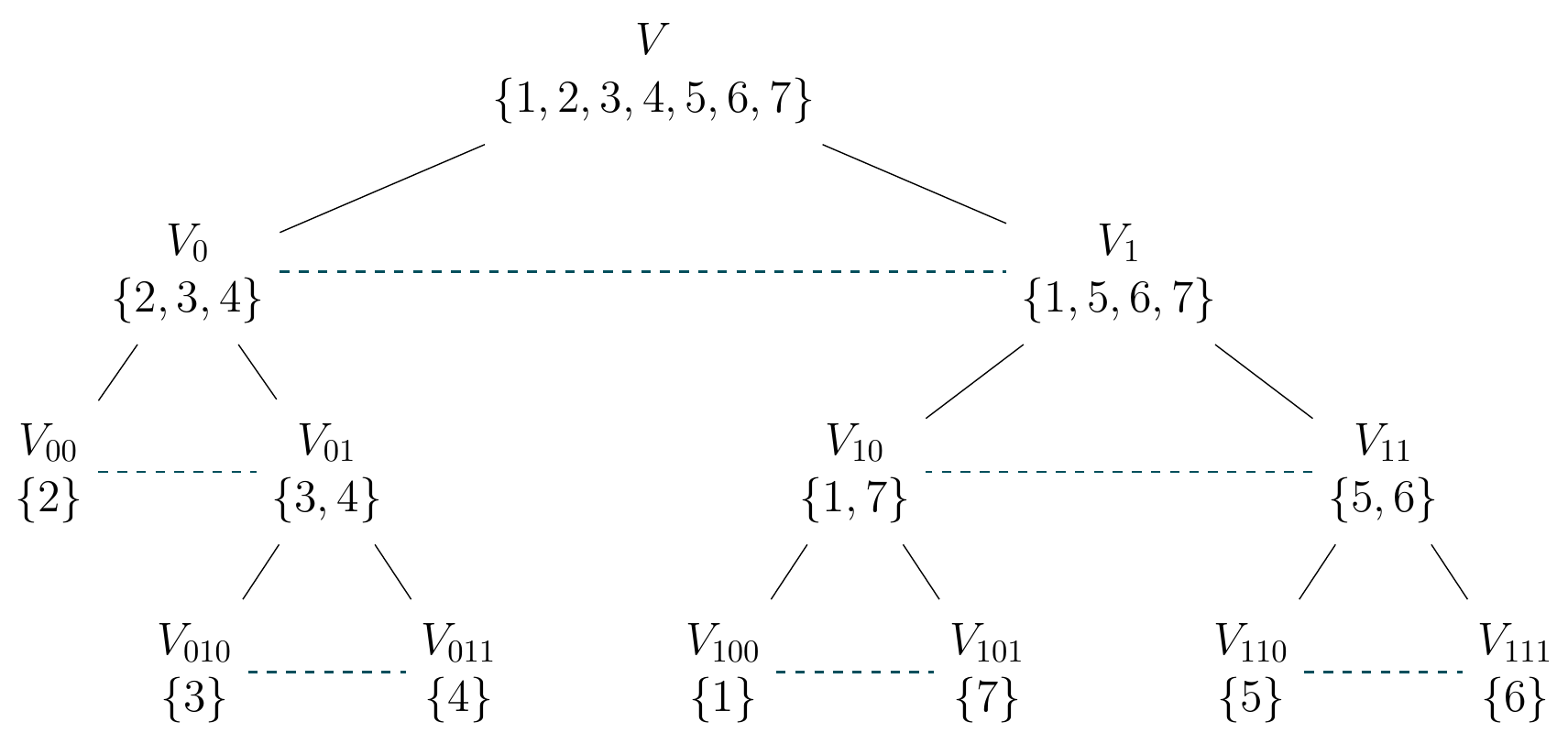}
	\caption[Illustration of tree induced by graph cutting]{
		Illustration of the tree induced by the graph cutting 
		procedure illustrated in Figure~\ref{fig:graph_cut}, for a 
		hypothetical problem with $\nclass = 7$ classes. Each dashed 
		line in the tree indicates a point where a binary classifier 
		will be trained. \label{fig:class_tree}}
\end{figure}

In the remainder of this section the results of an extensive simulation study 
are presented, which aims to evaluate the performance of this hierarchical 
classifier on multiclass classification problems. The classifier that will be 
used in each binary problem in the tree will be a linear support vector 
machine, but in practice any binary classifier could be used in the algorithm.  
The implementation of the hierarchical classifier based on the linear binary 
SVM will be called \emph{SmartSVM}.\footnote{The SmartSVM classifier and the 
	meta-learning and BER estimation techniques presented in the previous 
	sections have been implemented in the \texttt{smartsvm} Python 
	package, available at: 
	\url{https://github.com/HeroResearchGroup/SmartSVM}.}

The experimental setup is comparable to that used in \citet{van2016gensvm}, 
where a nested cross-validation (CV) approach is used to reduce bias in 
classifier performance \citep{stone1974cross}. From each original dataset 5 
independent training and test datasets are generated.  Subsequently, each 
classification method is trained using 10 fold CV on each of the training 
datasets. Finally, the model is retrained on the entire training dataset using 
the optimal hyperparameters and this model is used to predict the test set. In 
the experiments 16 datasets are used of varying dimensions from which 80 
independent test sets are constructed.  The train and test datasets were 
generated using a stratified split, such that the proportions of the classes 
correspond to those in the full dataset.  Table~\ref{tab:mdatasets} shows the 
descriptive statistics of each of the datasets used. Datasets are collected 
from the UCI repository \citep{Bache+Lichman:2013} and the KEEL repository 
\citep{alcala2010keel}.

\begin{table}[tb]
	\centering
	\caption[Dataset summary statistics]{Dataset summary statistics for 
		the datasets used in the experimental study.  The final two 
		columns denote the size of the smallest and the largest class, 
		respectively.  Datasets marked with an asterisk are collected 
		from the KEEL dataset repository, all others are from the UCI 
		repository. \label{tab:mdatasets}}
	\vskip\baselineskip
	\begin{tabular}{lrrrrr}
		Dataset & Instances & Features & Classes & $\min \nobs_k$ & 
		$\max \nobs_k$ \\
		\hline
		abalone    &   4177 &   8 & 20 &   14 &   689\\
		fars*      & 100959 & 338 &  7 &  299 & 42116\\
		flare      &   1066 &  19 &  6 &   43 &   331\\
		kr-vs-k    &  28056 &  34 & 18 &   27 &  4553\\
		letter     &  20000 &  16 & 26 &  734 &   813\\
		nursery    &  12960 &  19 &  4 &  330 &  4320\\
		optdigits  &   5620 &  64 & 10 &  554 &   572\\
		pageblocks &   5473 &  10 &  5 &   28 &  4913\\
		pendigits  &  10992 &  16 & 10 & 1055 &  1144\\
		satimage   &   6435 &  36 &  6 &  626 &  1533\\
		segment    &   2310 &  19 &  7 &  330 &   330\\
		shuttle    &  58000 &   9 &  6 &   23 & 45586\\
		texture*   &   5500 &  40 & 11 &  500 &   500\\
		wine red   &   1599 &  12 &  5 &   18 &   681\\
		wine white &   4898 &  12 &  6 &   20 &  2198\\
		yeast      &   1479 &   8 &  9 &   20 &   463\\
		\hline
	\end{tabular}
\end{table}

SmartSVM will be compared to five other linear multiclass SVMs in these 
experiments. Three of these alternatives are heuristic methods which use the 
binary SVM as underlying classifier, while two others are single-machine 
multiclass SVMs.  One of the most commonly used heuristic approaches to 
multiclass SVMs is the One vs.  One (OvO) method \citep{kressel1999pairwise} 
which solves a binary SVM for each of the $\nclass (\nclass - 1)$ pairs of 
classes. An alternative is the One vs.  Rest (OvR) method 
\citep{vapnik1998statistical} in which a binary SVM is solved for each of the 
$\nclass - 1$ binary problems obtained by separating a single class from the 
others. The directed acyclic graph (DAG) SVM was proposed by 
\citet{platt2000large} as an extension of the OvO approach. It has a similar 
training procedure as OvO, but uses a different prediction strategy.  In the 
OvO method a voting scheme is used where the class with the most votes from 
each binary classifier becomes the predicted label. In contrast, the DAGSVM 
method uses a voting scheme where the least likely class is voted away until 
only one remains. Finally, two single-machine multiclass SVMs are also 
compared: the method by \citet{crammer2002algorithmic} and GenSVM 
\citep{van2016gensvm}. 

All methods are implemented in either \verb+C+ or \verb|C++|, to ensure that 
speed of the methods can be accurately compared. The methods that use a binary 
SVM internally are implemented with LibLinear \citep{fan2008liblinear}.  
LibLinear also implements a fast solver for the method by 
\citet{crammer2002algorithmic} using the algorithm proposed by 
\citet{keerthi2008sequential}.  For SmartSVM the Bayes error rates and the 
corresponding classification tree were calculated once for each training 
dataset as a preprocessing step.  For most datasets the BERs were computed 
based on 3 orthogonal MSTs using the algorithm of 
\citet{whitney1972algorithm}. For the two largest datasets (fars and shuttle) 
the BER was computed based on a single MST using the algorithm of 
\citet{march2010fast}.  Computing these MSTs was done in parallel using at 
most 10 cores. In the results on training time presented below the training 
time of SmartSVM is augmented with the preprocessing time.

The binary SVM has a cost parameter for the regularization term, which is 
optimized using cross validation. The range considered for this parameter is 
$C \in \{2^{-18}, 2^{-16}, \ldots, 2^{18}\}$. The GenSVM method has additional 
hyperparameters which were varied in the same way as in the experiments of 
\citet{van2016gensvm}. All experiments were performed on the Dutch National 
LISA Compute Cluster using the \verb+abed+ utility.\footnote{See 
	\url{https://github.com/GjjvdBurg/abed}.}

\begin{table}[bt]
	\centering
	\caption[Training time SmartSVM experiments]{Total training time per 
		dataset in seconds, averaged over the 5 nested CV folds.  
		Minimal values per dataset are underlined.  As can be seen, 
		SmartSVM is the fastest method on 10 out of 16 
		datasets.\label{tab:total_time}}
	\vskip\baselineskip
	\begin{tabular}{lrrrrrr}
		Dataset & C \& S & DAG & GenSVM & OvO & OvR & SmartSVM \\
		\hline
		abalone & 13332 & 146.5 & 86787 & 134.3 & 214.4 & 
		\underline{75.0} \\
		fars & 158242 & 3108 & 205540 & \underline{2866} & 5630 & 3158 
		\\
		flare & 467.5 & 7.0 & 501.7 & 6.5 & 8.2 & \underline{5.0} \\
		krkopt & 86000 & 728.4 & 56554 & 680.4 & 1388 & 
		\underline{664.0} \\
		letter & 31684 & 407.3 & 44960 & \underline{381.5} & 1805 & 
		652.1 \\
		nursery & 2267 & 74.7 & 1363 & 68.7 & 94.9 & \underline{56.8} 
		\\
		optdigits & 88.0 & 20.0 & 31615 & 18.5 & 24.3 & 
		\underline{10.9} \\
		pageblocks & 523.4 & 26.8 & 2001 & \underline{25.2} & 48.6 & 
		27.6 \\
		pendigits & 2499 & 30.2 & 6591 & \underline{28.5} & 151.3 & 
		53.7 \\
		satimage & 5184 & 74.7 & 2563 & \underline{70.0} & 188.4 & 
		74.5 \\
		segment & 377.1 & 7.1 & 1542 & 7.0 & 18.7 & \underline{5.4} \\
		shuttle & 7658 & 570.8 & 14618 & \underline{561.9} & 1996 & 
		832.0 \\
		texture & 588.1 & 34.5 & 14774 & 32.8 & 69.6 & 
		\underline{19.3} \\
		winered & 706.4 & 12.3 & 542.7 & 11.1 & 18.3 & \underline{8.8} 
		\\
		winewhite & 2570 & 59.2 & 2570 & 55.1 & 70.9 & 
		\underline{36.7} \\
		yeast & 1209 & 16.7 & 1138 & 15.3 & 24.2 & \underline{13.2} \\
		\hline
	\end{tabular}
\end{table}

The experiments are compared on training time and out-of-sample predictive 
performance.  Table~\ref{tab:total_time} shows the results for training time, 
averaged over the 5 nested cross validation folds for each dataset. As can be 
seen SmartSVM is the fastest method on 10 out of 16 datasets. This can be 
attributed to the smaller number of binary problems that SmartSVM needs to 
solve compared to OvO and the fact that the binary problems are smaller than 
those solved by OvR. The OvO method is the fastest classification method on 
the remaining 6 datasets. The single-machine multiclass SVMs by 
\citet{crammer2002algorithmic} and \citet{van2016gensvm} both have larger 
computation times than the heuristic methods. Since GenSVM has a larger number 
of hyperparameters, it is interesting to look at the average time per 
hyperparameter configuration as well. In this case, GenSVM is on average 
faster than \citet{crammer2002algorithmic} due to the use of warm starts (see 
Appendix~\ref{app:mich_additional} for additional simulation results).

\begin{table}[bt]
	\centering
	\caption[Predictive Performance SmartSVM experiments]{Out-of-sample 
		predictive performance as measured by the adjusted Rand index.  
		Although SmartSVM doesn't often achieve the maximum 
		performance, there are several datasets for which SmartSVM 
		outperforms One vs. One, or differs from the maximum 
		performance in only the second or third decimal.  
		\label{tab:pred_perf_ari}}
	\vskip\baselineskip
	\begin{tabular}{lrrrrrr}
		Dataset & C \& S & DAG & GenSVM & OvO & OvR & SmartSVM \\
		\hline
		abalone & 0.0788 & 0.0898 & 0.0895 & \underline{0.0898} & 
		0.0791 & 0.0595 \\
		fars & 0.8127 & 0.8143 & 0.8085 & \underline{0.8146} & 0.8134 
		& 0.8115 \\
		flare & 0.4274 & 0.6480 & \underline{0.6687} & 0.6496 & 0.4084 
		& 0.6544 \\
		krkopt & 0.1300 & 0.1988 & 0.1779 & \underline{0.2022} & 
		0.1512 & 0.1585 \\
		letter & 0.6173 & 0.6991 & 0.5909 & \underline{0.7118} & 
		0.5155 & 0.4533 \\
		nursery & 0.8279 & 0.8175 & \underline{0.8303} & 0.8157 & 
		0.8095 & 0.8138 \\
		optdigits & 0.9721 & \underline{0.9854} & 0.9732 & 0.9850 & 
		0.9686 & 0.9640 \\
		pageblocks & \underline{0.7681} & 0.7335 & 0.6847 & 0.7353 & 
		0.6696 & 0.7214 \\
		pendigits & 0.9068 & 0.9566 & 0.8971 & \underline{0.9597} & 
		0.8607 & 0.8817 \\
		satimage & 0.7420 & 0.7652 & 0.7403 & \underline{0.7672} & 
		0.7107 & 0.7607 \\
		segment & \underline{0.9013} & 0.9000 & 0.8812 & 0.8982 & 
		0.8354 & 0.8986 \\
		shuttle & \underline{0.9275} & 0.8543 & 0.8887 & 0.8543 & 
		0.6925 & 0.8038 \\
		texture & 0.9936 & \underline{0.9950} & 0.9888 & 0.9947 & 
		0.9865 & 0.8977 \\
		winered & \underline{1.0000} & \underline{1.0000} & 0.9985 & 
		\underline{1.0000} & 0.8369 & \underline{1.0000} \\
		winewhite & \underline{1.0000} & \underline{1.0000} & 0.9998 & 
		\underline{1.0000} & 0.8131 & \underline{1.0000} \\
		yeast & \underline{0.2595} & 0.2540 & 0.2521 & 0.2587 & 0.2433 
		& 0.2088 \\
		\hline
	\end{tabular}
\end{table}

Classification performance of the methods is reported using the adjusted Rand 
index (ARI) which corrects for chance \citep{hubert1985comparing}. Use of this 
index as a classification metric has been proposed previously by 
\citet{santos2009use}.  Table~\ref{tab:pred_perf_ari} shows the predictive 
performance as measured with the ARI. As can be seen, SmartSVM obtains the 
maximum performance on two of the sixteen datasets. However, SmartSVM 
outperforms One vs. One on 3 datasets and outperforms One vs. Rest on 10 out 
of 16 datasets. The OvO and OvR methods are often used as default heuristic 
approaches for multiclass SVMs and are respectively the default strategies in 
the popular LibSVM \citep{CC01a} and LibLinear \citep{fan2008liblinear} 
libraries.  Since SmartSVM is often faster than these methods, our results 
indicate a clear practical benefit to using SmartSVM for multiclass 
classification.

\section{Discussion}
\label{sec:mich_discuss}

In this work the practical applicability of nonparametric Bayes error 
estimates to meta-learning and hierarchical classifier design has been 
investigated. For the BER estimate introduced by 
\citet{berisha2016empirically} a bias correction was derived which improves 
the accuracy of the estimator for classification problems with unequal class 
priors.  Furthermore, a normalization term was proposed which makes the BER 
estimates comparable in multiclass problems. An expression of the OvR-BER was 
given which represents the exact Bayes error for a single class in the 
multiclass problem and it was shown that this error can be efficiently 
estimated using the HP-estimator as well. A robustness analysis of the 
HP-estimator was performed which showed the benefit of using orthogonal MSTs 
in the estimator.

There are many potential applications of the BER estimates to meta-learning 
problems. Above, several possibilities were explored including the prediction 
of which pairs of classes are most difficult to distinguish and which 
individual classes will yield the highest error rate. Preliminary experiments 
with feature transformations were also performed, which showed that the BER 
estimates can be a useful tool in determining beneficial transformations 
before a classifier is trained.

Based on the weighted graph of pairwise BER estimates, a hierarchical 
multiclass classification method was proposed. The classifier uses a top-down 
splitting approach to create a tree of binary classification problems which 
increase in difficulty along the depth of the tree. By using a linear SVM for 
each classification problem, a hierarchical multiclass SVM was obtained which 
was named SmartSVM. Extensive simulation studies showed that SmartSVM is often 
faster than existing approaches and yields competitive predictive performance 
on several datasets.

Note that the SmartSVM classifier is only one example of how the BER estimates 
can be used to construct better classification methods. As discussed in 
Section~\ref{sec:meta_learning}, BER estimates could also be used to define 
class weights in a multiclass classifier.  Moreover, the min-cut strategy used 
for SmartSVM may not be the optimal way to construct the classification tree.  
Evaluating different approaches to constructing classification hierarchies and 
other applications of the BER estimates to multiclass classification problems 
are topics for further research.

\section*{Acknowledgements}
The computational experiments of this work were performed on the Dutch 
National LISA Compute Cluster, and supported by the Dutch National Science 
Foundation (NWO).  The authors thank SURFsara (\url{www.surfsara.nl}) for the 
support in using the LISA cluster. This research was partially supported by 
the US Army Research Office, grant W911NF-15-1-0479 and US Dept. of Energy 
grant DE-NA0002534.

\appendix

\section{One vs. Rest Bayes Error Rate}
\label{app:ovr_bayes}

In this section bounds for the One vs. Rest Bayes error rate will be derived, 
which measures the error of the Bayes classifier in correctly identifying an 
individual class.

\begin{definition}[OvR-BER]
	Let $f_1, f_2, \ldots, f_{\nclass}$ and $p_1, p_2, \ldots, 
	p_{\nclass}$ denote density functions and prior probabilities for the 
	classes 1 through $\nclass$ respectively, with $\sum_c p_c = 1$.  
	Then, the Bayes error rate between a class $k$ and the remaining 
	classes is given by
	\begin{equation}
		P_{e, k} =\qquad\qquad 
		\smashoperator{\int\limits_{\strut\max\limits_{l \neq k}\{ p_l 
				f_l(\bft{x})\} \geq p_k f_k(\bft{x})}} p_k 
		f_k(\bft{x}) \myd \bft{x}
		\quad + \quad
		\sum_{c \neq k}\qquad\qquad
		\smashoperator{\int\limits_{\strut\max\limits_{l \neq k}\{ p_l 
				f_l(\bft{x}) \} < p_k f_k(\bft{x})}} p_c 
		f_c(\bft{x}) \myd \bft{x}.
	\end{equation}
\end{definition}

Below it will be shown that the OvR-BER can be bounded using the 
Friedman-Rafsky statistic in the One-vs-Rest setting, $R(\bft{X}_k, \bigcup_{l 
	\neq k} \bft{X}_l)$. Let the mixture distribution of the classes $l 
\neq k$ be given by
\begin{equation}
	g_k(\bft{x}) = \frac{\sum_{l \neq k} p_l f_l(\bft{x})}{\sum_{l \neq k} 
		p_l},
\end{equation}
with prior probability $p_g = \sum_{l \neq k} p_l$. Then $\bft{X}_{g_k} = 
\bigcup_{l \neq k} \bft{X}_l$ can be seen as a draw from this mixture 
distribution. By Theorem~$2$ of \citet{berisha2016empirically} it holds that
\begin{equation}
	\tfrac{1}{2} - \tfrac{1}{2}\sqrt{u(f_k, g_k)} \leq \int \min\{p_k 
	f_k(\bft{x}), p_g g_k(\bft{x}) \} \myd \bft{x} \leq \tfrac{1}{2} - 
	\tfrac{1}{2} u(f_k, g_k).
\end{equation}
The following theorem relates this error to the OvR-BER defined above.

\begin{theorem}
	The error rate between class $k$ and the mixture distribution without 
	class $k$ is bounded from above by the OvR-BER,
	\begin{equation}
		Q_{e, k} = \int \min\{p_k f_k(\bft{x}), p_g g_k(\bft{x}) \} 
		\myd \bft{x} \leq P_{e, k}.
	\end{equation}
\end{theorem}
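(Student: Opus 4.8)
The plan is to establish the inequality directly by showing $P_{e,k} - Q_{e,k} \geq 0$, comparing the two quantities region by region. The first step is to rewrite the mixture term: since $p_g\, g_k(\bft{x}) = \sum_{l\neq k} p_l f_l(\bft{x})$, we have
\begin{equation}
	Q_{e,k} = \int \min\Bigl\{ p_k f_k(\bft{x}),\ \textstyle\sum_{l\neq k} p_l f_l(\bft{x}) \Bigr\} \myd\bft{x}.
\end{equation}
This recasts $Q_{e,k}$ as the binary overlap between class $k$ and the \emph{pooled} remainder, which is the natural object to compare against the genuine multiclass error $P_{e,k}$.

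Next I would introduce the two competing decision regions. Let $\mathcal{A} = \{\bft{x} : p_k f_k(\bft{x}) \leq \sum_{l\neq k} p_l f_l(\bft{x})\}$, on which the minimum in $Q_{e,k}$ equals $p_k f_k$, and let $\mathcal{R} = \{\bft{x} : \max_{l\neq k}\{ p_l f_l(\bft{x})\} \geq p_k f_k(\bft{x})\}$, the region in the first integral of $P_{e,k}$. Collecting $\sum_{c\neq k} p_c f_c$ under a single integral in $P_{e,k}$, both quantities split over a region and its complement,
\begin{align}
	Q_{e,k} &= \int_{\mathcal{A}} p_k f_k \myd\bft{x} + \int_{\mathcal{A}^c} \textstyle\sum_{l\neq k} p_l f_l \myd\bft{x}, \\
	P_{e,k} &= \int_{\mathcal{R}} p_k f_k \myd\bft{x} + \int_{\mathcal{R}^c} \textstyle\sum_{l\neq k} p_l f_l \myd\bft{x}.
\end{align}

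The key structural fact is that $\max_{l\neq k}\{ p_l f_l(\bft{x})\} \leq \sum_{l\neq k} p_l f_l(\bft{x})$ pointwise, since all priors and densities are nonnegative. Hence $\mathcal{R} \subseteq \mathcal{A}$, so the two pairs of regions agree everywhere except on $\mathcal{S} = \mathcal{A} \cap \mathcal{R}^c$, where $Q_{e,k}$ integrates $p_k f_k$ but $P_{e,k}$ integrates the pooled density. Using the disjoint decompositions $\mathcal{A} = \mathcal{R} \sqcup \mathcal{S}$ and $\mathcal{R}^c = \mathcal{A}^c \sqcup \mathcal{S}$, all contributions outside $\mathcal{S}$ cancel when subtracting, leaving
\begin{equation}
	P_{e,k} - Q_{e,k} = \int_{\mathcal{S}} \Bigl( \textstyle\sum_{l\neq k} p_l f_l(\bft{x}) - p_k f_k(\bft{x}) \Bigr) \myd\bft{x} \geq 0,
\end{equation}
where nonnegativity follows because $\mathcal{S} \subseteq \mathcal{A}$. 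This yields $Q_{e,k} \leq P_{e,k}$.

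The main obstacle is bookkeeping rather than analysis: I must verify that, after substituting the decompositions above, the $p_k f_k$ contribution on $\mathcal{R}$ and the pooled contribution on $\mathcal{A}^c$ cancel identically, isolating the single integral over $\mathcal{S}$. Boundary ties cause no difficulty, because $\mathcal{A}$ is defined by the \emph{weak} inequality $p_k f_k \leq \sum_{l\neq k} p_l f_l$, so the integrand is nonnegative on all of $\mathcal{S}\subseteq\mathcal{A}$ regardless of how the boundary sets are assigned. No hypotheses beyond integrability of the densities enter, so the result is ultimately a pointwise comparison of integrands carried out region by region.
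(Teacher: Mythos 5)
Your proof is correct and follows essentially the same route as the paper's: your sets $\mathcal{A}$, $\mathcal{R}$, and $\mathcal{S} = \mathcal{A} \cap \mathcal{R}^c$ are exactly the paper's $T$, $S$, and $U$, and both arguments hinge on the pointwise bound $\max_{l \neq k}\{p_l f_l(\bft{x})\} \leq \sum_{l \neq k} p_l f_l(\bft{x})$ giving the inclusion $\mathcal{R} \subseteq \mathcal{A}$. The only cosmetic difference is that you subtract the two decompositions to isolate the single integral $\int_{\mathcal{S}} \bigl( \sum_{l \neq k} p_l f_l(\bft{x}) - p_k f_k(\bft{x}) \bigr) \myd\bft{x} \geq 0$, whereas the paper writes $Q_{e,k} = P_{e,k} + \int_U p_k f_k(\bft{x}) \myd\bft{x} - \int_U p_g g_k(\bft{x}) \myd\bft{x}$; these are the same identity.
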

\begin{proof}
	Note that
	\begin{equation}
		Q_{e, k} = \quad \smashoperator{\int\limits_{p_k f_k(\bft{x}) 
				\leq p_g g_k(\bft{x})}} p_k f_k(\bft{x}) \myd 
		\bft{x} \quad + \quad \smashoperator{\int\limits_{p_k 
				f_k(\bft{x}) > p_g g_k(\bft{x})}} p_g 
		g_k(\bft{x}) \myd \bft{x}.
	\end{equation}
	To simplify the notation, introduce the sets
	\begin{align}
		T &= \left\{\bft{x} \in \mathbb{R}^{\nvar} : p_k f_k(\bft{x}) 
			\leq p_g g_k(\bft{x}) \right\} \\
		S &= \left\{\bft{x} \in \mathbb{R}^{\nvar} : p_k f_k(\bft{x}) 
			\leq \max_{l \neq k} \{ p_l f_l(\bft{x})\} \right\}
	\end{align}
	and denote their respective complements by $T'$ and $S'$. Then,
	\begin{align}
		Q_{e, k} &= \int_T p_k f_k(\bft{x}) \myd \bft{x} + \int_{T'} 
		p_g g_k(\bft{x}) \myd \bft{x} \\
		P_{e, k} &= \int_{S} p_k f_k(\bft{x}) \myd \bft{x} + \int_{S'} 
		p_g g_k(\bft{x}) \myd \bft{x}.
	\end{align}
	Since $p_g g_k(\bft{x}) = \sum_{l \neq k} p_l f_l(\bft{x}) \leq 
	\max_{l \neq k} p_l f_l(\bft{x})$ it holds that $S \subseteq T$ and 
	$T' \subseteq S'$. Hence,
	\begin{align}
		\int_{T} p_k f_k(\bft{x}) \myd \bft{x} &= \int_{S} p_k 
		f_k(\bft{x}) \myd \bft{x} + \int_{T \setminus S} p_k 
		f_k(\bft{x}) \myd \bft{x} \\
		\int_{S'} p_g g_k(\bft{x}) \myd \bft{x} &= \int_{T'} p_g 
		g_k(\bft{x}) \myd \bft{x} + \int_{S' \setminus T'} p_g 
		g_k(\bft{x}) \myd \bft{x}
	\end{align}
	However, the sets $T \setminus S$ and $S' \setminus T'$ both equal
	\begin{equation}
		U = \left\{ \bft{x} \in \mathbb{R}^{\nvar} : \max_{l\neq k} \{ 
			p_l f_l(\bft{x}) \} < p_k f_k(\bft{x}) \leq p_g 
			g_k(\bft{x}) \right\},
	\end{equation}
	so it follows that
	\begin{equation}
		Q_{e, k} = P_{e, k} + \int_{U} p_k f_k(\bft{x}) \myd \bft{x} - 
		\int_{U} p_g g_k(\bft{x}) \myd \bft{x} \leq P_{e, k}
	\end{equation}
	by definition of the set $U$.
\end{proof}

This provides a lower bound for $P_{e, k}$ in terms of $u(f_k, g_k)$. What 
remains to be shown is that $P_{e, k}$ has an upper bound in terms of $u(f_k, 
g_k)$.  No such bound has yet been found. However, the following result can be 
presented which does bound $P_{e, k}$ from above.

\begin{theorem}
	For a single class $k$ the OvR-BER is smaller than or equal to the sum 
	of the pairwise BER estimates involving class $k$.
\end{theorem}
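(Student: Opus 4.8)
The plan is to prove the inequality $P_{e,k} \leq \sum_{j \neq k} P_e(f_k, f_j)$ by a pointwise comparison of integrands, where $P_e(f_k, f_j) = \int \min\{ p_k f_k(\bft{x}), p_j f_j(\bft{x}) \} \myd \bft{x}$ is the pairwise BER involving class $k$. Since both $P_{e,k}$ and the right-hand side are integrals over $\mathbb{R}^{\nvar}$, it suffices to show that the integrand defining $P_{e,k}$ is dominated pointwise by the integrand $\sum_{j\neq k}\min\{p_k f_k(\bft{x}), p_j f_j(\bft{x})\}$ of the pairwise sum, and then integrate.

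First I would split $\mathbb{R}^{\nvar}$ into the region $A = \{\bft{x} : \max_{l\neq k}\{p_l f_l(\bft{x})\} \geq p_k f_k(\bft{x})\}$ and its complement $A'$, matching the two regions in the definition of $P_{e,k}$. On $A$ the OvR integrand equals $p_k f_k(\bft{x})$. Here there is at least one index $l \neq k$ attaining the maximum with $p_l f_l(\bft{x}) \geq p_k f_k(\bft{x})$, so that single term of the pairwise sum already contributes $\min\{p_k f_k, p_l f_l\} = p_k f_k(\bft{x})$; as every remaining term is nonnegative, the pairwise integrand is at least $p_k f_k(\bft{x})$, which dominates the OvR integrand on $A$.

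Next, on $A'$ we have $p_j f_j(\bft{x}) < p_k f_k(\bft{x})$ for every $j \neq k$, so the OvR integrand equals $\sum_{c\neq k} p_c f_c(\bft{x})$, while each pairwise term reduces to $\min\{p_k f_k, p_j f_j\} = p_j f_j(\bft{x})$. Hence the two integrands coincide on $A'$. Combining both regions gives the pointwise domination, and integrating yields the claimed bound.

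The main point requiring care is the normalization of the pairwise BER. I would state explicitly that $P_e(f_k, f_j)$ is taken with the unnormalized multiclass priors $p_k, p_j$, so that the $\min$ terms align exactly with the OvR integrand; if instead one uses the renormalized binary priors $p_k/(p_k+p_j)$, each pairwise term is scaled by $1/(p_k+p_j) \geq 1$ and the inequality holds a fortiori. A minor technicality is the handling of ties in the defining $\max_{l\neq k}$, but these lie on the boundary between $A$ and $A'$ and do not affect the integrals.
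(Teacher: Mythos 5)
Your proof is correct, and it takes a recognizably different route from the paper's, though the core observation is shared. The paper argues at the level of integrals: it decomposes the pairwise sum $F_{e,k} = \sum_{c \neq k} \int \min\{p_k f_k(\bft{x}), p_c f_c(\bft{x})\}\myd\bft{x}$ into integrals of $p_k f_k$ over the regions $\{p_k f_k < p_c f_c\}$ and integrals of $p_c f_c$ over the regions $\{p_k f_k > p_c f_c\}$, and then compares each group with the corresponding term of $P_{e,k}$ by region inclusion: the sets $\{p_k f_k < p_c f_c\}$ jointly cover the first region of integration of $P_{e,k}$ (up to ties), and each set $\{p_k f_k > \max_{l\neq k} p_l f_l\}$ is contained in $\{p_k f_k > p_c f_c\}$. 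You argue instead at the level of integrands: using the partition into $A$ and $A'$ inherited from the definition of $P_{e,k}$, you dominate the OvR integrand pointwise by $\sum_{c\neq k}\min\{p_k f_k(\bft{x}), p_c f_c(\bft{x})\}$ and integrate once. Your organization buys two small things. First, ties are genuinely a non-issue in your version: the $\geq$/$<$ dichotomy defining $A$ and $A'$ is exhaustive, and on $A$ a maximizing index $l$ satisfies $p_l f_l \geq p_k f_k$, which is all you need, so your closing caveat about ties is in fact superfluous rather than a gap; by contrast, the paper's decomposition of each $\min$ into strict inequalities silently drops the tie sets $\{p_k f_k = p_c f_c\}$, a harmless but real imprecision that your argument avoids. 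Second, you establish that the integrands coincide exactly on $A'$, which localizes all slack in the bound to the region $A$, namely to the contributions of the non-maximal classes in the pairwise sum; the paper's term-by-term comparison does not make this visible. Finally, your attention to the prior convention is well placed: the paper's proof tacitly uses the unnormalized multiclass priors $p_k$, $p_c$ in the pairwise BERs (your primary reading), and your a fortiori remark that renormalizing to binary priors only rescales each term by $1/(p_k + p_c) \geq 1$ makes the conclusion robust to that choice.
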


\begin{proof}
	Recall that the OvR-BER for class $k$ is given by
	\begin{equation}
		P_{e, k} = \qquad \qquad \smashoperator{
			\int\limits_{
				\strut
				p_k f_k(\bft{x}) \leq
				\max\limits_{l \neq k} \{ p_l f_l(\bft{x}) \}
			}
		} p_k f_k(\bft{x}) \myd \bft{x}
		\quad + \quad \sum_{c \neq k} \quad
		\smashoperator{
			\int\limits_{
				\strut
				p_k f_k(\bft{x}) >
				\max\limits_{l \neq k} \{ p_l f_l(\bft{x}) \}
			}
		} p_c f_c(\bft{x}) \myd \bft{x},
	\end{equation}
	and denote the sum of the pairwise BERs involving $k$ as $F_{e,k}$ 
	given by,
	\begin{align}
		F_{e, k} &= \sum_{c \neq k} \int \min \{ p_k f_k(\bft{x}), p_c 
		f_c(\bft{x}) \} \myd \bft{x} \\
		&=
		\sum_{c \neq k} \quad \smashoperator{
			\int\limits_{
				\strut
				p_k f_k(\bft{x}) < p_c f_c(\bft{x})
			}
		} p_k f_k(\bft{x}) \myd \bft{x}
		\quad + \quad
		\sum_{c \neq k} \quad \smashoperator{
			\int\limits_{
				\strut
				p_k f_k(\bft{x}) > p_c f_c(\bft{x})
			}
		} p_c f_c(\bft{x}) \myd \bft{x}.
	\end{align}
	Then comparing the first term of $F_{e, k}$ with that of $P_{e, k}$ 
	shows
	\begin{equation}
		\sum_{c \neq k} \quad \smashoperator{
			\int\limits_{
				\strut
				p_k f_k(\bft{x}) < p_c f_c(\bft{x})
			}
		} p_k f_k(\bft{x}) \myd \bft{x}
		\quad \geq \quad
		\smashoperator{
			\int\limits_{
				\strut
				p_k f_k(\bft{x}) \leq
				\max\limits_{l \neq k} \{ p_l f_l(\bft{x}) \}
			}
		} p_k f_k(\bft{x}) \myd \bft{x},
	\end{equation}
	since the area of integration on the left is larger than on the right.  
	Similarly,
	\begin{equation}
		\sum_{c \neq k} \quad \smashoperator{
			\int\limits_{
				\strut
				p_k f_k(\bft{x}) > p_c f_c(\bft{x})
			}
		} p_c f_c(\bft{x}) \myd \bft{x}
		\quad \geq \quad
		\sum_{c \neq k} \quad \smashoperator{
			\int\limits_{
				\strut
				p_k f_k(\bft{x}) >
				\max\limits_{l \neq k} \{ p_l f_l(\bft{x})\}
			}
		} p_c f_c(\bft{x}) \myd \bft{x}
	\end{equation}
	for the same reason. This completes the proof.
\end{proof}

\section{Jensen-Shannon Bound Inequality}
\label{app:jensen_shannon}

In this section a proof is given for the statement that the Henze-Penrose 
upper bound on the Bayes error rate is tighter than the Jensen-Shannon upper 
bound derived by \citet{lin1991divergence}. Before presenting the proof, the 
following lemma is presented.

\begin{lemma}
	\label{lem:ineq}
	For $x, y > 0$ it holds that
	\begin{equation}
		x \log \left(1 + \frac{y}{x}\right) + y\log\left(1 + 
			\frac{x}{y}\right) \geq \frac{4\log(2)x y}{x + y}.
	\end{equation}
\end{lemma}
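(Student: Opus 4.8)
The plan is to exploit the fact that the inequality is positively homogeneous of degree one: replacing $(x,y)$ by $(tx,ty)$ scales both sides by $t$. So I would first normalize, setting $s = x/(x+y)\in(0,1)$ and $1-s = y/(x+y)$, i.e. taking $x+y=1$. Using $1+y/x = 1/s$ and $1+x/y = 1/(1-s)$, the left-hand side collapses to the binary entropy,
\begin{equation}
	x\log\!\left(1+\tfrac{y}{x}\right) + y\log\!\left(1+\tfrac{x}{y}\right) = -s\log s - (1-s)\log(1-s) =: H(s),
\end{equation}
while the right-hand side becomes $4\log(2)\,s(1-s)$. Thus the whole lemma reduces to the one-variable statement $H(s)\ge 4\log(2)\,s(1-s)$ on $(0,1)$, with equality at $s=\tfrac12$ (both sides equal $\log 2$).

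Next I would symmetrize about $s=\tfrac12$ via the substitution $s=(1+v)/2$, $v\in(-1,1)$. A short computation gives $H(s)=\log 2 - \tfrac12\psi(v)$ and $4\log(2)\,s(1-s)=\log 2 - \log(2)\,v^2$, where $\psi(v)=(1+v)\log(1+v)+(1-v)\log(1-v)$. Hence the target is equivalent to the clean symmetric bound
\begin{equation}
	\psi(v)\;\le\;2\log(2)\,v^2, \qquad v\in(-1,1).
\end{equation}

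The key step I would lean on is a power-series comparison. Differentiating gives $\psi'(v)=\log\frac{1+v}{1-v}=2\sum_{k\ge 0} v^{2k+1}/(2k+1)$, and integrating term by term from $0$ yields $\psi(v)=2\sum_{m\ge 1} v^{2m}/\big((2m-1)(2m)\big)$. I would pair this with the classical identity $\log 2 = \sum_{m\ge 1}\big(\tfrac{1}{2m-1}-\tfrac{1}{2m}\big)=\sum_{m\ge 1} 1/\big((2m-1)(2m)\big)$, the regrouped alternating harmonic series. Since every coefficient $1/\big((2m-1)(2m)\big)$ is positive and $v^{2m}=(v^2)^m\le v^2$ for all $m\ge 1$ whenever $|v|\le 1$, termwise comparison gives $\psi(v)=2\sum_m \frac{v^{2m}}{(2m-1)(2m)}\le 2\sum_m \frac{v^2}{(2m-1)(2m)} = 2\log(2)\,v^2$, which is exactly the required bound; the endpoints $v=\pm1$ follow by continuity.

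The main obstacle to watch for is that the naive route---forming $\Delta(s)=H(s)-4\log(2)s(1-s)$ and invoking convexity together with the single equality point $s=\tfrac12$---does not work, because $\Delta$ is convex only near $\tfrac12$ and concave near the endpoints (indeed $\Delta$ vanishes at $s=0,\tfrac12,1$ and is strictly positive in between). This is precisely why I would avoid a crude convexity argument and instead use the series comparison, whose entire force comes from the monotonicity $v^{2m}\le v^2$ on $[-1,1]$ together with the exact evaluation $\sum 1/\big((2m-1)(2m)\big)=\log 2$. A purely calculus-based alternative is available---tracking the signs of $\Delta'(s)=\log\frac{1-s}{s}-4\log(2)(1-2s)$ and $\Delta''(s)=8\log 2 - 1/(s(1-s))$ to locate the unique interior maximum on $(0,\tfrac12)$ and using symmetry---but it is messier, so the power-series argument is the one I would present.
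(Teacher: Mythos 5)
Your proof is correct, and it takes a genuinely different route from the paper's. The paper also exploits homogeneity, but normalizes differently: it sets $t = y/x$, reduces the claim to $f(t) = \left(\tfrac{1}{t}+1\right)\log(1+t) + (1+t)\log\left(1+\tfrac{1}{t}\right) \geq 4\log(2) = f(1)$, and then runs a careful sign analysis of $f'$ and $f''$ --- including an implicitly defined critical point $\alpha > 1$ where $2\log(1+\alpha) = \alpha$ and several boundary limits --- to conclude that $f$ decreases on $(0,1)$ and increases on $(1,\infty)$. Your normalization $x+y=1$ instead exposes the left-hand side as the binary entropy $H(s)$, turning the lemma into the sharp quadratic entropy bound $H(s) \geq 4\log(2)\,s(1-s)$, which after symmetrizing becomes $\psi(v) \leq 2\log(2)\,v^2$ and falls to a termwise series comparison against the regrouped alternating harmonic series $\sum_{m\geq 1} \tfrac{1}{(2m-1)(2m)} = \log 2$. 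Each approach has merits: the paper's argument is elementary calculus with no series manipulation, but is longer, requires the auxiliary function $g(t) = 2\log(1+t)-t$ and limit bookkeeping at $0$ and $\infty$, and leaves the equality case somewhat implicit; yours is shorter once the series for $\psi$ is established, makes the equality case ($v=0$, i.e.\ $x=y$) and strict inequality elsewhere transparent, and reveals the information-theoretic content of the lemma, which is apt given that the enclosing theorem compares the Henze--Penrose bound to the Jensen--Shannon (entropy-based) bound. Your cautionary remark about the naive convexity argument is also accurate: $\Delta''(s) = 8\log 2 - 1/\bigl(s(1-s)\bigr)$ changes sign, so convexity alone cannot close the argument --- this is essentially why the paper's calculus proof needs the two-regime analysis around its critical point $\alpha$.
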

\begin{proof}
	Let $t = \tfrac{y}{x}$ and multiply both sides by $\tfrac{1}{t} + 1 > 
	0$, then the inequality reduces to
	\begin{equation}
		\left(\frac{1}{t} + 1\right)\log\left( 1 + t \right) + (1 + t)
		\log\left( 1 + \frac{1}{t} \right) \geq 4\log(2).
	\end{equation}
	Denote the left hand side by $f(t)$. The proof will now proceed by 
	showing that $f(t) \geq f(1) = 4 \log(2)$ for all $t > 0$. The 
	derivatives of $f(t)$ are given by
	\begin{equation}
		f'(t) = \log\left(1 + \frac{1}{t}\right) - \frac{\log(1 + 
			t)}{t^2} \qquad\textrm{ and }\qquad f''(t) = 
		\frac{2\log(1 + t) - t}{t^3}.
	\end{equation}
	Write the numerator of $f''(t)$ as $g(t)$ such that
	\begin{equation}
		g(t) = 2\log(1 + t) - t, \qquad\textrm{ and }\qquad g'(t) = 
		\frac{1 - t}{1 + t}.
	\end{equation}
	Then it is clear that $g'(t) > 0$ for $0 < t < 1$ and $g'(t) < 0$ for 
	$t > 1$. Furthermore $\lim_{t \rightarrow 0^+} g(t) = 0$ and $\lim_{t 
		\rightarrow \infty} g(t) = -\infty$. Thus, it follows that 
	$g(t)$ increases on $0 < t < 1$ and decreases for $t > 1$. Let $t = 
	\alpha > 1$ be such that $g(\alpha) = 0$, then $g(t) > 0$ for $0 < t < 
	\alpha$ and $g(t) < 0$ for $t > \alpha$.

	From this it follows that $f''(t) > 0$ for $0 < t < \alpha$ and 
	$f''(t) < 0$ for $t > \alpha$. Hence, $f'(t)$ is increasing on $0 < t 
	< \alpha$ and decreasing for $t > \alpha$. Moreover, $\lim_{t 
		\rightarrow 0^+} f'(t) = -\infty$ and $f'(1) = 0$. Thus, it 
	follows that $f'(t)$ is negative on $0 < t < 1$, positive for $t > 1$, 
	and attains a maximum at $t = \alpha$ after which it decreases to 
	$\lim_{t \rightarrow \infty} f'(t) = 0$. Since $\lim_{t \rightarrow 
		0^+} f(t) = \infty$ it follows that $f(t)$ is decreasing on $0 
	< t < 1$ and increasing for $t > 1$.
\end{proof}

\begin{definition}[Kullback-Leibler Divergence]
	For probability density functions $f_1(\bft{x})$\\
	and $f_2(\bft{x})$ the Kullback-Leibler divergence is given by
	\begin{equation}
		D_{KL}(f_1 \| f_2) = \int f_1(\bft{x}) \log_2 
		\frac{f_1(\bft{x})}{f_2(\bft{x})} \myd \bft{x},
	\end{equation}
	\citet{kullback1951information}.
\end{definition}

\begin{definition}[Jensen-Shannon Divergence]
	According to \citet{el1997agnostic} the Jensen-Shannon divergence for 
	two probability density functions $f_1(\bft{x})$ and $f_2(\bft{x})$ 
	with prior probabilities $p_1$ and $p_2$, can be stated in terms of 
	the Kullback-Leibler divergence as
	\begin{equation}
		JS(f_1, f_2) = p_1 D_{KL}(f_1 \| M) + p_2 D_{KL}(f_2 \| M)
	\end{equation}
	with $M(\bft{x}) = p_1 f_1(\bft{x}) + p_2 f_2(\bft{x})$ the mixture 
	distribution and $p_2 = 1 - p_1$.
\end{definition}

\begin{theorem}
	For $p_1 = p_2 = \tfrac{1}{2}$ the Henze-Penrose upper bound on the 
	BER is tighter than the Jensen-Shannon upper bound of 
	\citet{lin1991divergence},
	\begin{equation}
		P_e(f_1, f_2) \leq \tfrac{1}{2} - \tfrac{1}{2}u_{HP} \leq 
		\tfrac{1}{2}J,
	\end{equation}
	where $J = H(p_1) - JS(f_1, f_2)$ with $H(p_1)$ the binary entropy and 
	$JS(f_1, f_2)$ the Jensen-Shannon divergence.
\end{theorem}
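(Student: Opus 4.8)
The first inequality in the chain, $P_e(f_1,f_2) \le \tfrac{1}{2} - \tfrac{1}{2}u_{HP}$, needs no new argument: it is exactly the upper bound of Theorem~2 of \citet{berisha2016empirically} reviewed in Section~\ref{sec:mich_theory}. The entire content of the statement is therefore the second inequality $\tfrac{1}{2} - \tfrac{1}{2}u_{HP} \le \tfrac{1}{2}J$. Since $J = H(p_1) - JS(f_1,f_2)$ and $H(\tfrac{1}{2}) = 1$, cancelling the common factor $\tfrac{1}{2}$ shows this is equivalent to the single clean claim $JS(f_1,f_2) \le u_{HP}$, and this is what I would prove.

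The plan is to write $u_{HP}$ and $JS$ as integrals of the same pointwise form and compare integrands. Specialising $u_{HP} = 4p_1p_2 D_{HP} + (p_1-p_2)^2$ to $p_1 = p_2 = \tfrac{1}{2}$ gives $u_{HP} = \int \tfrac{(f_1-f_2)^2}{2(f_1+f_2)}\myd\bft{x}$; writing $(f_1-f_2)^2 = (f_1+f_2)^2 - 4 f_1 f_2$ and using $\int (f_1+f_2)\myd\bft{x} = 2$ turns this into
\[
	u_{HP} = 1 - 2\int \frac{f_1 f_2}{f_1+f_2}\myd\bft{x}.
\]
For the Jensen--Shannon term, substitute $M = \tfrac{1}{2}(f_1 + f_2)$ into each Kullback--Leibler divergence, use $\log_2 \tfrac{2f_1}{f_1+f_2} = 1 - \log_2(1 + f_2/f_1)$ together with its symmetric counterpart, and again $\int(f_1+f_2)\myd\bft{x} = 2$, to obtain
\[
	JS(f_1,f_2) = 1 - \frac{1}{2}\int \left[ f_1 \log_2\!\left(1 + \frac{f_2}{f_1}\right) + f_2 \log_2\!\left(1 + \frac{f_1}{f_2}\right) \right]\myd\bft{x}.
\]

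With both sides in this form, $JS \le u_{HP}$ reduces to
\[
	\int \left[ f_1 \log_2\!\left(1 + \frac{f_2}{f_1}\right) + f_2 \log_2\!\left(1 + \frac{f_1}{f_2}\right) \right]\myd\bft{x} \ge 4\int \frac{f_1 f_2}{f_1+f_2}\myd\bft{x},
\]
which I would establish pointwise. Multiplying the pointwise inequality through by $\log 2$ to pass from $\log_2$ to the natural logarithm turns the claim into
\[
	f_1 \log\!\left(1 + \frac{f_2}{f_1}\right) + f_2 \log\!\left(1 + \frac{f_1}{f_2}\right) \ge \frac{4\log(2)\, f_1 f_2}{f_1 + f_2},
\]
which is precisely Lemma~\ref{lem:ineq} with $x = f_1(\bft{x})$ and $y = f_2(\bft{x})$. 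Integrating over $\bft{x}$ then yields $JS \le u_{HP}$ and closes the chain.

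The real obstacle --- the pointwise inequality --- has already been isolated and settled in Lemma~\ref{lem:ineq}, so the remaining work is careful bookkeeping. The two points I would watch are, first, keeping the logarithm bases consistent so that the factor $4\log(2)$ supplied by the lemma cancels exactly against the factor $4$ coming from the $u_{HP}$ side (this is where the identity $H(\tfrac{1}{2}) = 1$ quietly does its job, matching the additive constant $1$ on both sides); and second, the degenerate points where $f_1 = 0$ or $f_2 = 0$, at which every integrand above vanishes in the limit, so that the inequality extends there by continuity and contributes nothing.
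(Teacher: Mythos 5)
Your proposal is correct and follows essentially the same route as the paper's proof: both rewrite $\tfrac{1}{2} - \tfrac{1}{2}u_{HP}$ as $\int f_1 f_2/(f_1 + f_2) \myd \bft{x}$, rewrite the Jensen--Shannon bound in terms of $\int \left[ f_1 \log_2\left(1 + f_2/f_1\right) + f_2 \log_2\left(1 + f_1/f_2\right) \right] \myd \bft{x}$, and conclude by applying Lemma~\ref{lem:ineq} pointwise with $x = f_1(\bft{x})$, $y = f_2(\bft{x})$. The only cosmetic difference is that you cancel the common additive constant and factor $\tfrac{1}{2}$ to reduce the claim to $JS(f_1,f_2) \leq u_{HP}(f_1,f_2)$, whereas the paper compares the two bounds directly; the algebraic identities and the key lemma are identical.
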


\begin{proof}
	First, note that with $p_1 = p_2 = \tfrac{1}{2}$ the binary entropy 
	$H(p_1) = H(\tfrac{1}{2}) = 1$. Second, for the equiprobable case it 
	holds that
	\begin{equation}
		\tfrac{1}{2} - \tfrac{1}{2}u_{HP}(f_1, f_2) = \int 
		\frac{f_1(\bft{x}) f_2(\bft{x})}{f_1(\bft{x}) + f_2(\bft{x})} 
		\myd \bft{x}.
	\end{equation}
	The Jensen-Shannon upper bound can be written as
	\begin{align}
		\tfrac{1}{2} J &= \tfrac{1}{2} - \tfrac{1}{4} \int 
		f_1(\bft{x}) \log_2 \frac{2 f_1(\bft{x})}{f_1(\bft{x}) + 
			f_2(\bft{x})} \myd \bft{x} - \tfrac{1}{4} \int 
		f_2(\bft{x}) \log_2 \frac{2 f_2(\bft{x})}{f_1(\bft{x}) + 
			f_2(\bft{x})} \myd \bft{x} \\
		&= \tfrac{1}{4} \int f_1(\bft{x}) + f_2(\bft{x}) \myd \bft{x} 
		- \tfrac{1}{4} \int f_1(\bft{x}) \log_2 \frac{2 
			f_1(\bft{x})}{f_1(\bft{x}) + f_2(\bft{x})} \myd 
		\bft{x} \\
		&\qquad -\tfrac{1}{4} \int f_2(\bft{x}) \log_2 \frac{2 
			f_2(\bft{x})}{f_1(\bft{x}) + f_2(\bft{x})} \myd 
		\bft{x} \nonumber \\
		&= \tfrac{1}{4} \int f_1(\bft{x}) \left[ 1 - \log_2 \frac{2 
				f_1(\bft{x})}{f_1(\bft{x}) + f_2(\bft{x})} 
		\right] \myd \bft{x} \\
		&\qquad +\tfrac{1}{4} \int f_2(\bft{x})\left[ 1 - \log_2 
			\frac{2 f_2(\bft{x})}{f_1(\bft{x}) + f_2(\bft{x})} 
		\right] \myd \bft{x} \\
		&= \tfrac{1}{4} \int f_1(\bft{x}) \log_2 \left(1 + 
			\frac{f_2(\bft{x})}{f_1(\bft{x}} \right) + 
		f_2(\bft{x}) \log_2 \left( 1 + 
			\frac{f_1(\bft{x})}{f_2(\bft{x})} \right) \myd \bft{x}
	\end{align}
	By Lemma \ref{lem:ineq} it follows that
	\begin{equation}
		f_1(\bft{x}) \log_2 \left(1 + \frac{f_2(\bft{x})}{f_1(\bft{x}} 
		\right) + f_2(\bft{x}) \log_2 \left( 1 + 
			\frac{f_1(\bft{x})}{f_2(\bft{x})} \right)  \geq 
		\frac{4f_1(\bft{x})f_2(\bft{x})}{f_1(\bft{x}) + f_2(\bft{x})},
	\end{equation}
	and therefore
	\begin{equation}
		\tfrac{1}{2} J \geq \tfrac{1}{4} \int 
		\frac{4f_1(\bft{x})f_2(\bft{x})}{f_1(\bft{x}) + f_2(\bft{x})} 
		\myd \bft{x} = \tfrac{1}{2} - \tfrac{1}{2}u_{HP}(f_1, f_2).
	\end{equation}
\end{proof}

\section{Additional Simulation Results}
\label{app:mich_additional}

In this section some additional simulation results are presented for the 
SmartSVM experiments presented in Section~\ref{sec:smartsvm}.  
Table~\ref{tab:mich_avg_time} shows the average time per hyperparameter 
configuration for each of the methods. This is especially useful for comparing 
GenSVM \citep{van2016gensvm} with the other methods, as it has a larger set of 
hyperparameters to consider.

A commonly used tool to summarize results of simulation experiments is to use 
rank plots \citep{demvsar2006statistical}. For each dataset the methods are 
ranked, with the best method receiving rank 1 and the worst method receiving 
rank $6$ (since there are $6$ methods in this experiment). In case of ties 
fractional ranks are used. By averaging the ranks over all datasets, a visual 
summary of the results can be obtained. Figures~\ref{fig:mich_rank_perf}, 
\ref{fig:mich_rank_time_total} and \ref{fig:mich_rank_time_average} show these 
average ranks for predictive performance, total training time, and average 
training time respectively.

The ordering of OvO and SmartSVM in the rank plots for training time may seem 
counterintuitive, considering that SmartSVM is more often the fastest method.  
This can be explained by the fact that in the cases where SmartSVM is slower 
than OvO it is usually also slower than DAG. In contrast, where SmartSVM is 
the fastest method OvO is usually the second fastest method. Because of this, 
SmartSVM obtains a slightly higher average rank than OvO.

\begin{table}[t]
	\centering
	\caption[Average training time SmartSVM experiments]{Average training 
		time per hyperparameter configuration in seconds, averaged 
		over the 5 nested CV folds. Minimal values per dataset are 
		underlined. \label{tab:mich_avg_time}}
	\begin{tabular}{lrrrrrrr}
		Dataset & C \& S & DAG & GenSVM & OvO & OvR & SmartSVM \\
		\hline
		abalone & 702 & 7.712 & 254 & 7.067 & 11.282 & 
		\underline{3.960} \\
		fars & 8329 & 164 & 601 & \underline{151} & 296 & 184 \\
		flare & 24.606 & 0.369 & 1.467 & 0.344 & 0.431 & 
		\underline{0.263} \\
		krkopt & 4526 & 38.337 & 165 & 35.813 & 73.053 & 
		\underline{35.627} \\
		letter & 1668 & 21.438 & 131 & \underline{20.076} & 94.977 & 
		34.454 \\
		nursery & 119 & 3.932 & 3.985 & 3.617 & 4.994 & 
		\underline{3.133} \\
		optdigits & 4.630 & 1.053 & 92.441 & 0.971 & 1.278 & 
		\underline{0.590} \\
		pageblocks & 27.550 & 1.408 & 5.852 & \underline{1.326} & 
		2.559 & 1.504 \\
		pendigits & 132 & 1.589 & 19.273 & \underline{1.498} & 7.965 & 
		2.872 \\
		satimage & 273 & 3.931 & 7.494 & \underline{3.685} & 9.918 & 
		3.942 \\
		segment & 19.846 & 0.376 & 4.509 & 0.370 & 0.985 & 
		\underline{0.285} \\
		shuttle & 403 & 30.042 & 42.744 & \underline{29.571} & 105 & 
		43.928 \\
		texture & 30.954 & 1.816 & 43.200 & 1.727 & 3.665 & 
		\underline{1.028} \\
		winered & 37.179 & 0.646 & 1.587 & 0.586 & 0.966 & 
		\underline{0.465} \\
		winewhite & 135 & 3.115 & 7.514 & 2.903 & 3.732 & 
		\underline{1.954} \\
		yeast & 63.655 & 0.877 & 3.327 & 0.807 & 1.272 & 
		\underline{0.699} \\
		\hline
	\end{tabular}
\end{table}

\begin{figure}[ht]
	\def\RankFigureWidth{.75\textwidth}
	\centering
	\subfloat[][Predictive performance]{%
		\includegraphics[width=\RankFigureWidth]{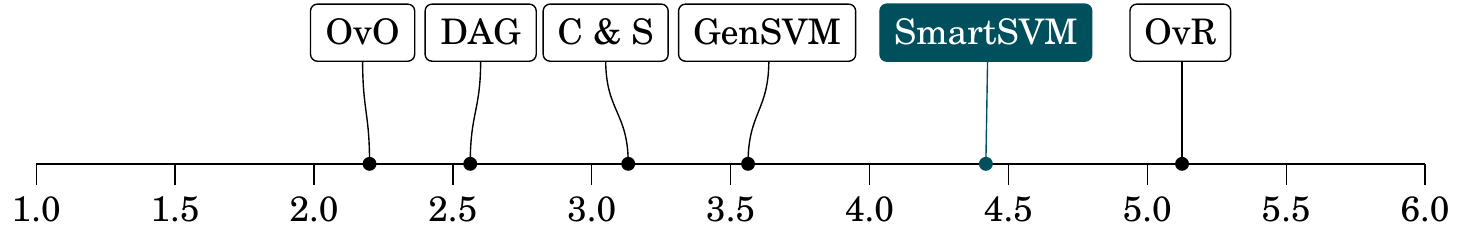}
		\label{fig:mich_rank_perf}
	}\qquad
	\subfloat[][Total training time]{%
		\includegraphics[width=\RankFigureWidth]{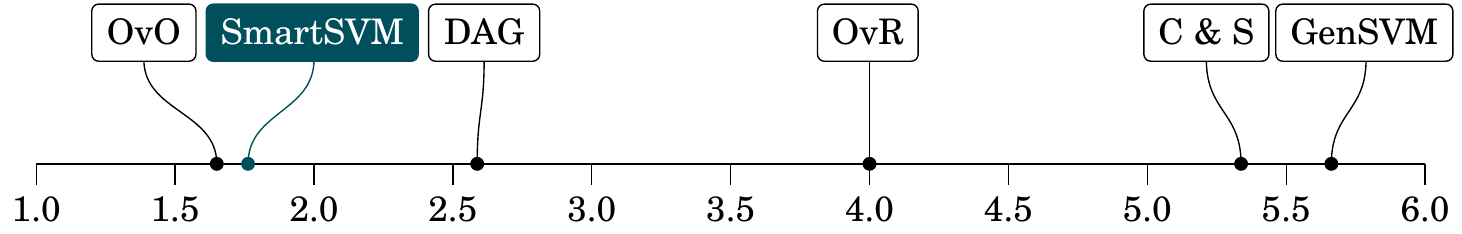}
		\label{fig:mich_rank_time_total}
	}\qquad
	\subfloat[][Average training time]{%
		\includegraphics[width=\RankFigureWidth]{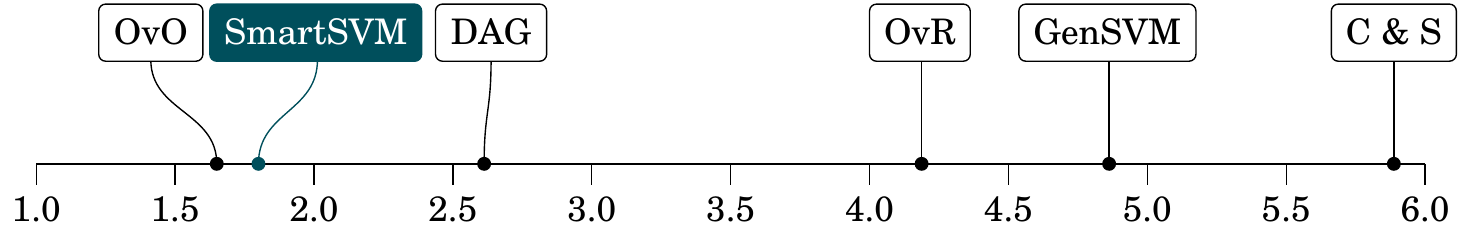}
		\label{fig:mich_rank_time_average}
	}\qquad
	\caption[Rank plots of simulation study]{Rank plots of classifier 
		performance in the simulation study.  
		Figure~\subref{fig:mich_rank_perf} shows the average ranks for 
		out-of-sample predictive performance as measured by the ARI.  
		Figures~\subref{fig:mich_rank_time_total} and 
		\subref{fig:mich_rank_time_average} respectively show the 
		average ranks for total training time and average training 
		time per hyperparameter configuration. 
		\label{fig:mich_rank_plots}}
\end{figure}

\FloatBarrier
\bibliographystyle{bibstyle}
\bibliography{references}

\begin{thebibliography}{51}
\providecommand{\natexlab}[1]{#1}

\bibitem[{Alcal{\'a} et~al.(2010)Alcal{\'a}, Fern{\'a}ndez, Luengo, Derrac,
  Garc{\'\i}a, S{\'a}nchez, and Herrera}]{alcala2010keel}
Alcal{\'a}, J., Fern{\'a}ndez, A., Luengo, J., Derrac, J., Garc{\'\i}a, S.,
  S{\'a}nchez, L., and Herrera, F.
\newblock Keel data-mining software tool: data set repository, integration of
  algorithms and experimental analysis framework.
\newblock \emph{Journal of Multiple-Valued Logic and Soft Computing},
  17(2-3):255--287, 2010.

\bibitem[{Avi-Itzhak and Diep(1996)}]{avi1996arbitrarily}
Avi-Itzhak, H. and Diep, T.
\newblock Arbitrarily tight upper and lower bounds on the bayesian probability
  of error.
\newblock \emph{IEEE Transactions on Pattern Analysis and Machine
  Intelligence}, 18(1):89--91, 1996.

\bibitem[{Bache and Lichman(2013)}]{Bache+Lichman:2013}
Bache, K. and Lichman, M.
\newblock {UCI} machine learning repository.
\newblock 2013.

\bibitem[{Belkin and Niyogi(2003)}]{belkin2003laplacian}
Belkin, M. and Niyogi, P.
\newblock Laplacian eigenmaps for dimensionality reduction and data
  representation.
\newblock \emph{Neural computation}, 15(6):1373--1396, 2003.

\bibitem[{Berisha and Hero(2015)}]{berisha2015empirical}
Berisha, V. and Hero, A.
\newblock Empirical non-parametric estimation of the {F}isher information.
\newblock \emph{Signal Processing Letters, IEEE}, 22(7):988--992, 2015.

\bibitem[{Berisha et~al.(2016)Berisha, Wisler, Hero, and
  Spanias}]{berisha2016empirically}
Berisha, V., Wisler, A., Hero, A., and Spanias, A.
\newblock Empirically estimable classification bounds based on a nonparametric
  divergence measure.
\newblock \emph{IEEE Transactions on Signal Processing}, 64(3):580--591, 2016.

\bibitem[{Bhattacharyya(1946)}]{bhattacharyya1946measure}
Bhattacharyya, A.
\newblock On a measure of divergence between two multinomial populations.
\newblock \emph{Sankhy{\=a}: The Indian Journal of Statistics}, 7(4):401--406,
  1946.

\bibitem[{Bor{\r{u}}vka(1926)}]{boruuvka1926jistem}
Bor{\r{u}}vka, O.
\newblock O jist{\'e}m probl{\'e}mu minim{\'a}ln{\'\i}m.
\newblock \emph{Pr{\'a}ce Moravsk{\'e} Pridovedeck{\'e} Spolecnosti}, 3:37--58,
  1926.

\bibitem[{Chang and Lin(2011)}]{CC01a}
Chang, C. and Lin, C.
\newblock {LIBSVM}: A library for support vector machines.
\newblock \emph{ACM Transactions on Intelligent Systems and Technology},
  2(3):27:1--27:27, 2011.

\bibitem[{Chernoff(1952)}]{chernoff1952measure}
Chernoff, H.
\newblock A measure of asymptotic efficiency for tests of a hypothesis based on
  the sum of observations.
\newblock \emph{The Annals of Mathematical Statistics}, 23(4):493--507, 1952.

\bibitem[{Cortes and Vapnik(1995)}]{cortes1995support}
Cortes, C. and Vapnik, V.
\newblock Support-vector networks.
\newblock \emph{Machine Learning}, 20(3):273--297, 1995.

\bibitem[{Cox(1958)}]{cox1958regression}
Cox, D.
\newblock The regression analysis of binary sequences.
\newblock \emph{Journal of the Royal Statistical Society. Series B
  (Methodological)}, 20(2):215--242, 1958.

\bibitem[{Crammer and Singer(2002)}]{crammer2002algorithmic}
Crammer, K. and Singer, Y.
\newblock On the algorithmic implementation of multiclass kernel-based vector
  machines.
\newblock \emph{The Journal of Machine Learning Research}, 2(Dec):265--292,
  2002.

\bibitem[{Csisz{\'a}r(1975)}]{csiszar1975divergence}
Csisz{\'a}r, I.
\newblock I-divergence geometry of probability distributions and minimization
  problems.
\newblock \emph{The Annals of Probability}, 3(1):146--158, 1975.

\bibitem[{Dem{\v{s}}ar(2006)}]{demvsar2006statistical}
Dem{\v{s}}ar, J.
\newblock Statistical comparisons of classifiers over multiple data sets.
\newblock \emph{The Journal of Machine Learning Research}, 7(Jan):1--30, 2006.

\bibitem[{El-Yaniv and Etzion-Rosenberg(2010)}]{el2010hierarchical}
El-Yaniv, R. and Etzion-Rosenberg, N.
\newblock Hierarchical multiclass decompositions with application to authorship
  determination.
\newblock \emph{arXiv preprint arXiv:1010.2102}, 2010.

\bibitem[{El-Yaniv et~al.(1998)El-Yaniv, Fine, and Tishby}]{el1997agnostic}
El-Yaniv, R., Fine, S., and Tishby, N.
\newblock Agnostic classification of markovian sequences.
\newblock In: \emph{Advances of Neural Information Processing Systems 10}, pp.
  465--471. MIT Press, 1998.

\bibitem[{Fan et~al.(2008)Fan, Chang, Hsieh, Wang, and Lin}]{fan2008liblinear}
Fan, R.E., Chang, K.W., Hsieh, C.J., Wang, X.R., and Lin, C.J.
\newblock {LIBLINEAR}: A library for large linear classification.
\newblock \emph{The Journal of Machine Learning Research}, 9(Aug):1871--1874,
  2008.

\bibitem[{Frank and Kramer(2004)}]{frank2004ensembles}
Frank, E. and Kramer, S.
\newblock Ensembles of nested dichotomies for multi-class problems.
\newblock In: \emph{Proceedings of the 21st International Conference on Machine
  Learning}, pp. 39--46. ACM, 2004.

\bibitem[{Friedman and Rafsky(1979)}]{friedman1979multivariate}
Friedman, J. and Rafsky, L.
\newblock Multivariate generalizations of the {W}ald-{W}olfowitz and {S}mirnov
  two-sample tests.
\newblock \emph{The Annals of Statistics}, 7(4):697--717, 1979.

\bibitem[{Fukunaga(1990)}]{fukunaga1990statistical}
Fukunaga, K.
\newblock \emph{Introduction to Statistical Pattern Recognition}.
\newblock Academic Press, 1990.

\bibitem[{Hashlamoun et~al.(1994)Hashlamoun, Varshney, and
  Samarasooriya}]{hashlamoun1994tight}
Hashlamoun, W., Varshney, P., and Samarasooriya, V.
\newblock A tight upper bound on the bayesian probability of error.
\newblock \emph{IEEE Transactions on Pattern Analysis and Machine
  Intelligence}, 16(2):220--224, 1994.

\bibitem[{Henze and Penrose(1999)}]{henze1999multivariate}
Henze, N. and Penrose, M.
\newblock On the multivariate runs test.
\newblock \emph{Annals of statistics}, 27(1):290--298, 1999.

\bibitem[{Ho and Basu(2002)}]{ho2002complexity}
Ho, T. and Basu, M.
\newblock Complexity measures of supervised classification problems.
\newblock \emph{IEEE Transactions on Pattern Analysis and Machine
  Intelligence}, 24(3):289--300, 2002.

\bibitem[{Hubert and Arabie(1985)}]{hubert1985comparing}
Hubert, L. and Arabie, P.
\newblock Comparing partitions.
\newblock \emph{Journal of Classification}, 2(1):193--218, 1985.

\bibitem[{Kailath(1967)}]{kailath1967divergence}
Kailath, T.
\newblock The divergence and {B}hattacharyya distance measures in signal
  selection.
\newblock \emph{IEEE Transactions on Communication Technology}, 15(1):52--60,
  1967.

\bibitem[{Keerthi et~al.(2008)Keerthi, Sundararajan, Chang, Hsieh, and
  Lin}]{keerthi2008sequential}
Keerthi, S., Sundararajan, S., Chang, K.W., Hsieh, C.J., and Lin, C.J.
\newblock A sequential dual method for large scale multi-class linear {SVM}s.
\newblock In: \emph{Proceedings of the 14th {ACM} {SIGKDD} International
  Conference on Knowledge Discovery and Data Mining}, pp. 408--416. 2008.

\bibitem[{Kre{\ss}el(1999)}]{kressel1999pairwise}
Kre{\ss}el, U.
\newblock Pairwise classification and support vector machines.
\newblock In: B.~Sch\"{o}lkopf, C.J.C. Burges, and A.J. Smola, editors,
  \emph{Advances in Kernel Methods}, pp. 255--268. MIT Press, 1999.

\bibitem[{Kullback and Leibler(1951)}]{kullback1951information}
Kullback, S. and Leibler, R.
\newblock On information and sufficiency.
\newblock \emph{The Annals of Mathematical Statistics}, 22(1):79--86, 1951.

\bibitem[{LeCun et~al.(1998)LeCun, Bottou, Bengio, and
  Haffner}]{lecun1998gradient}
LeCun, Y., Bottou, L., Bengio, Y., and Haffner, P.
\newblock Gradient-based learning applied to document recognition.
\newblock \emph{Proceedings of the IEEE}, 86(11):2278--2324, 1998.

\bibitem[{Lin(1991)}]{lin1991divergence}
Lin, J.
\newblock Divergence measures based on the shannon entropy.
\newblock \emph{IEEE Transactions on Information Theory}, 37(1):145--151, 1991.

\bibitem[{Lorena and De~Carvalho(2010)}]{lorena2010building}
Lorena, A. and De~Carvalho, A.
\newblock Building binary-tree-based multiclass classifiers using separability
  measures.
\newblock \emph{Neurocomputing}, 73(16):2837--2845, 2010.

\bibitem[{Lorena et~al.(2008)Lorena, De~Carvalho, and Gama}]{lorena2008review}
Lorena, A., De~Carvalho, A., and Gama, J.
\newblock A review on the combination of binary classifiers in multiclass
  problems.
\newblock \emph{Artificial Intelligence Review}, 30(1):19--37, 2008.

\bibitem[{Lu et~al.(1998)Lu, Guo, and Feldkamp}]{lu1998robust}
Lu, Y., Guo, H., and Feldkamp, L.
\newblock Robust neural learning from unbalanced data samples.
\newblock In: \emph{Proceedings of the IEEE International Joint Conference on
  Neural Networks}, volume~3, pp. 1816--1821. IEEE, 1998.

\bibitem[{March et~al.(2010)March, Ram, and Gray}]{march2010fast}
March, W., Ram, P., and Gray, A.
\newblock Fast {Euclidean} minimum spanning tree: algorithm, analysis, and
  applications.
\newblock In: \emph{Proceedings of the 16th {ACM} {SIGKDD} International
  Conference on Knowledge Discovery and Data Mining}, pp. 603--612. ACM, 2010.

\bibitem[{McCulloch and Pitts(1943)}]{mcculloch1943logical}
McCulloch, W. and Pitts, W.
\newblock A logical calculus of the ideas immanent in nervous activity.
\newblock \emph{The Bulletin of Mathematical Biophysics}, 5(4):115--133, 1943.

\bibitem[{Pearson(1901)}]{pearson1901liii}
Pearson, K.
\newblock On lines and planes of closest fit to systems of points in space.
\newblock \emph{The London, Edinburgh, and Dublin Philosophical Magazine and
  Journal of Science}, 2(11):559--572, 1901.

\bibitem[{Platt et~al.(2000)Platt, Cristianini, and
  Shawe-Taylor}]{platt2000large}
Platt, J., Cristianini, N., and Shawe-Taylor, J.
\newblock Large margin {DAG}s for multiclass classification.
\newblock In: S.A. Solla, T.K. Leen, and K.~M\"{u}ller, editors, \emph{Advances
  in Neural Information Processing Systems 12}, pp. 547--553. MIT Press, 2000.

\bibitem[{Qiao and Liu(2009)}]{qiao2009adaptive}
Qiao, X. and Liu, Y.
\newblock Adaptive weighted learning for unbalanced multicategory
  classification.
\newblock \emph{Biometrics}, 65(1):159--168, 2009.

\bibitem[{Santos and Embrechts(2009)}]{santos2009use}
Santos, J. and Embrechts, M.
\newblock On the use of the adjusted rand index as a metric for evaluating
  supervised classification.
\newblock In: \emph{Proceedings of the 19th International Conference on
  Artificial Neural Networks: Part II}, pp. 175--184. Springer-Verlag, 2009.

\bibitem[{Schwenker and Palm(2001)}]{schwenker2001tree}
Schwenker, F. and Palm, G.
\newblock Tree-structured support vector machines for multi-class pattern
  recognition.
\newblock In: \emph{International Workshop on Multiple Classifier Systems}, pp.
  409--417. Springer, 2001.

\bibitem[{Stoer and Wagner(1997)}]{stoer1997simple}
Stoer, M. and Wagner, F.
\newblock A simple min-cut algorithm.
\newblock \emph{Journal of the ACM}, 44(4):585--591, 1997.

\bibitem[{Stone(1974)}]{stone1974cross}
Stone, M.
\newblock Cross-validatory choice and assessment of statistical predictions.
\newblock \emph{Journal of the Royal Statistical Society. Series B
  (Methodological)}, 36(2):111--147, 1974.

\bibitem[{Takahashi and Abe(2002)}]{takahashi2002decision}
Takahashi, F. and Abe, S.
\newblock Decision-tree-based multiclass support vector machines.
\newblock In: \emph{Neural Information Processing, 2002. ICONIP'02. Proceedings
  of the 9th International Conference on}, volume~3, pp. 1418--1422. IEEE,
  2002.

\bibitem[{Tibshirani and Hastie(2007)}]{tibshirani2007margin}
Tibshirani, R. and Hastie, T.
\newblock Margin trees for high-dimensional classification.
\newblock \emph{Journal of Machine Learning Research}, 8(Mar):637--652, 2007.

\bibitem[{Van~den Burg and Groenen(2016)}]{van2016gensvm}
Van~den Burg, G. and Groenen, P.
\newblock {GenSVM}: A generalized multiclass support vector machine.
\newblock \emph{Journal of Machine Learning Research}, 17(225):1--42, 2016.

\bibitem[{Vapnik(1998)}]{vapnik1998statistical}
Vapnik, V.
\newblock \emph{Statistical learning theory}.
\newblock Wiley, New York, 1998.

\bibitem[{Vural and Dy(2004)}]{vural2004hierarchical}
Vural, V. and Dy, J.
\newblock A hierarchical method for multi-class support vector machines.
\newblock In: \emph{Proceedings of the 21st International Conference on Machine
  Learning}, pp. 105--112. ACM, 2004.

\bibitem[{Wald(1947)}]{wald1947foundations}
Wald, A.
\newblock Foundations of a general theory of sequential decision functions.
\newblock \emph{Econometrica}, 15(4):279--313, 1947.

\bibitem[{Whitney(1972)}]{whitney1972algorithm}
Whitney, V.
\newblock Algorithm 422: minimal spanning tree.
\newblock \emph{Communications of the ACM}, 15(4):273--274, 1972.

\bibitem[{Wisler et~al.(2016)Wisler, Berisha, Wei, Ramamurthy, and
  Spanias}]{wisler2016empirically}
Wisler, A., Berisha, V., Wei, D., Ramamurthy, K., and Spanias, A.
\newblock Empirically-estimable multi-class classification bounds.
\newblock In: \emph{2016 IEEE International Conference on Acoustics, Speech and
  Signal Processing (ICASSP)}, pp. 2594--2598. IEEE, 2016.

\end{thebibliography}

\end{document}